\documentclass{llncs}
\usepackage{makeidx}  
\usepackage{graphicx} 
\usepackage{grffile} 
\usepackage{epsfig} 
\usepackage{amsmath}
\usepackage{amssymb}
\usepackage{subcaption}
\usepackage{mathtools}
\usepackage{cite}
\usepackage{booktabs}
\usepackage{hyperref}
\hypersetup{
    colorlinks=true,
    linkcolor=blue,
    citecolor=red,
}

\newtheorem{formation}{Formation Guideline}
\def\acknowledgement{\par\addvspace{17pt}\small\rmfamily
\trivlist\if!\ackname!\item[]\else
\item[\hskip\labelsep
{\bfseries\ackname}]\fi}

\newenvironment{acknowledgements}{\begin{acknowledgement}}
{\end{acknowledgement}}

\title{Distributed Circumnavigation Control with Dynamic spacing for a Heterogeneous Multi-robot System}

\author{Weijia Yao\inst{1} \and Sha Luo\inst{1} \and Huimin Lu\inst{1} \and Junhao Xiao\inst{1}}
\institute{Department of Automation, National University of Defense Technology, Changsha 410073, P.~R.~China\\ \email{weijia.yao.nudt@gmail.com, luoshasha1992@gmail.com, lhmnew@nudt.edu.cn, junhao.xiao@hotmail.com} }


\begin{document}

\maketitle
\thispagestyle{empty}
\pagestyle{empty}

\begin{abstract}

Circumnavigation control is useful in real-world applications such as entrapping a hostile target. In this paper, we consider a heterogeneous multi-robot system where robots have different physical properties, such as maximum movement speeds. Instead of equal-spacing which is assumed in many existing studies, dynamic spacing according to robots' properties is proposed in this paper. For this purpose, two new concepts - \textit{utility} and \textit{formation guideline} - are presented. Then a distributed circumnavigation control algorithm based on utilities and formation guidelines is designed for any number of mobile robots from random 3D positions to circumnavigate a target. Theoretical analysis and experimental results are provided to prove the stability and effectiveness of the proposed control algorithm. 

\end{abstract}


\section{Introduction}


One of the most prominent research topics on distributed multi-robot system is the formation control problem. Significant efforts have been made on the \textit{circular formation control} and \textit{circumnavigation control} problems. In circular formation control problem, robots remain in their positions after the formation is generated, while in circumnavigation control problem, they still encircle around the target. In this sense, circular formation control could be regarded as a special case of circumnavigation control when the circumnavigation speed equals to zero. 
	
There are already many studies on circumnavigation control (or circular formation control) problems. Most of the existing studies only consider the case where robots are distributed evenly on the formation (i.e., equal spacing), such as \cite{Marshall2015, Pavone2007, Yu2016, Zheng2015a, Tang2014, zheng2013distributed}. In addition, the control algorithms proposed in these studies are only applicable on the 2D plane.  Nevertheless, \cite{franchi2016} proposes algorithms which are still effective in 3D space. The formation spacing, however, is fixed and equal. Although this is effective for a homogeneous multi-robot system, it may not be sufficient for a heterogeneous one where robots have different properties, such as maximum movement speeds. \cite{wang2013} and \cite{wang2014controlling} propose a distributed control law for a multi-robot system to form a circular formation with any desired spacing among robots. However, it assumes that the robots are placed initially on a prescribed circle and the control algorithm is not applicable in the 3D space. Another major disadvantage is that the desired spacing, which is a global quantity, should be specified for each robot beforehand. If the specified spacing does not sum up to $2 \pi$, or if robots are informed of inconsistent specified spacing, they will form an erroneous formation. Moreover, to the best of our knowledge, there are no studies concerning dynamic spacing for a heterogeneous multi-robot system. 
	
In this study, we suppose that mobile robots are heterogeneous in terms of their kinematics abilities, such as maximum locomotion speeds, etc. In a scenario where these mobile robots need to entrap a hostile target, their inter-robot spacing should be different for better performance; those robots with lower mobility are supposed to gather together with smaller spacing than those with higher mobility, so the probability for the target to flee away from the formation is lower. We also consider the deterioration of individual performance due to physical worn-out or damage. Therefore, their spacing should be varied in a dynamic way during the circumnavigation process. Based on this, the goal of this paper is to propose a new distributed circumnavigation control algorithm which is able to control a group of heterogeneous mobile robots from any initial positions to circumnavigate a target with dynamic spacing in the 3D space. 

The main contribution of this work is twofold. First, this paper proposes the concept of \textit{utility} and \textit{formation guideline}. Based on these two new concepts, we design a distributed circumnavigation control algorithm which enables robots to adjust their spacing dynamically according to the local variations of their utilities; a pre-specified desired spacing is not necessary (but it is necessary for studies such as \cite{franchi2016, wang2013, wang2014controlling}). The control algorithm is distributed and applicable for a heterogeneous multi-robot system of arbitrary size. Second, the distributed control algorithm does not require robots to be placed initially on a prescribed circular trajectory (but it is required in \cite{wang2013, wang2014controlling}). Their initial positions can be arbitrarily chosen in the 3D space rather than being restricted on a 2D plane (which is the case in \cite{wang2013, wang2014controlling}). In addition, the control algorithm can respond effectively to the situation where robots quit or join the circumnavigation process (but it is not studied in much literature such as \cite{Marshall2015, Tang2014, wang2013, wang2014controlling}).

	
The remainder of this paper is organized as follows. Section \ref{sec2} introduces the circumnavigation control problem based on utilities and derives its corresponding mathematical formulation. Section \ref{sec3} proposes the circumnavigation control algorithm. In section \ref{sec4}, simulation and real-robot experiments are performed and results are analysed. Finally, section \ref{sec5} concludes the paper and summarizes the future work.

	\begin{figure}[htb]
		\begin{minipage}{0.3\columnwidth}
		 \centering
		 \includegraphics[height=\linewidth]{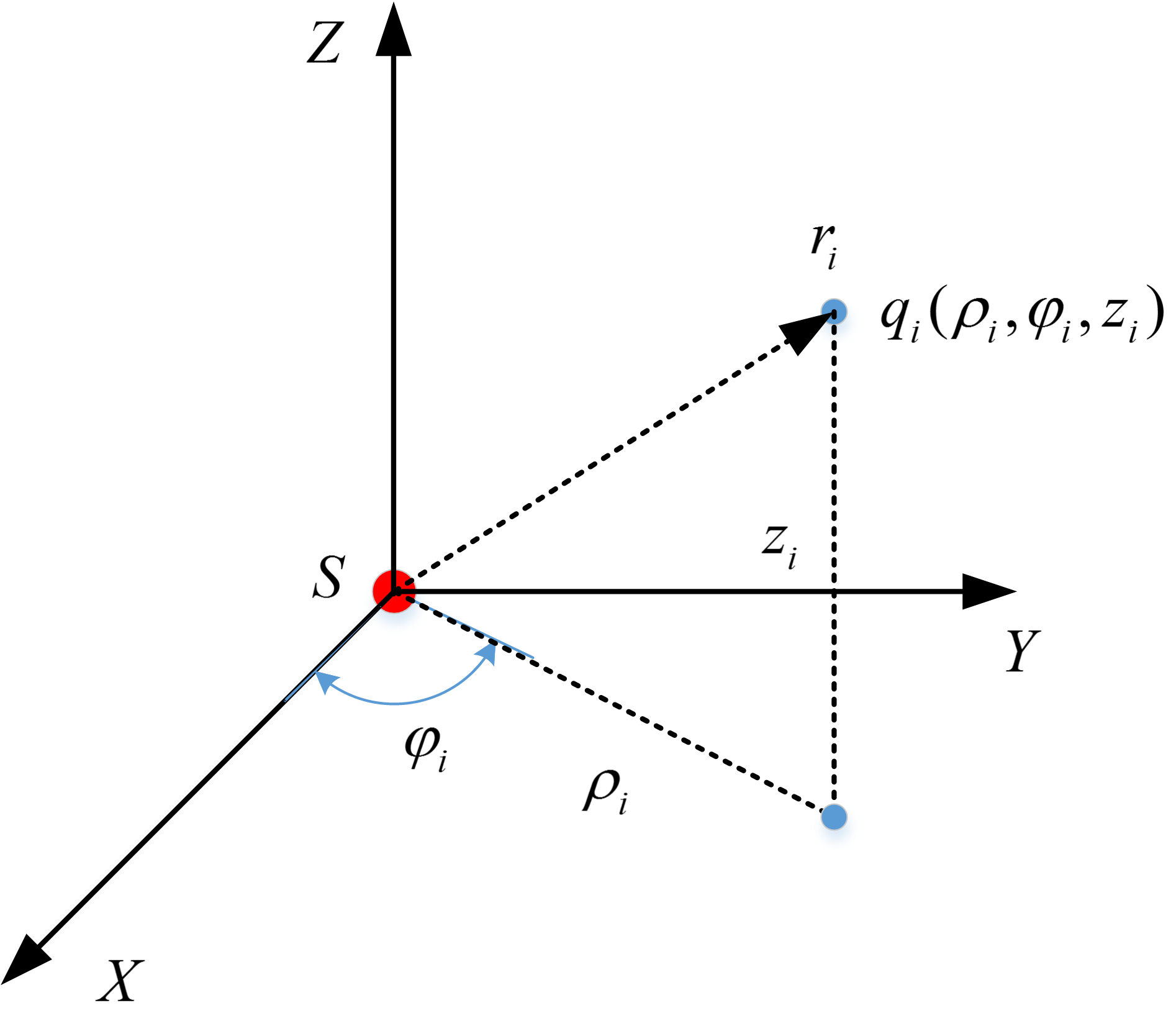}
		 \caption{The body reference frame with the target $S$ as the origin.}
		 \label{circum_pic1}
		\end{minipage}\hfill
		\begin{minipage}{0.3\columnwidth}
		 \centering
		 \includegraphics[height=\linewidth]{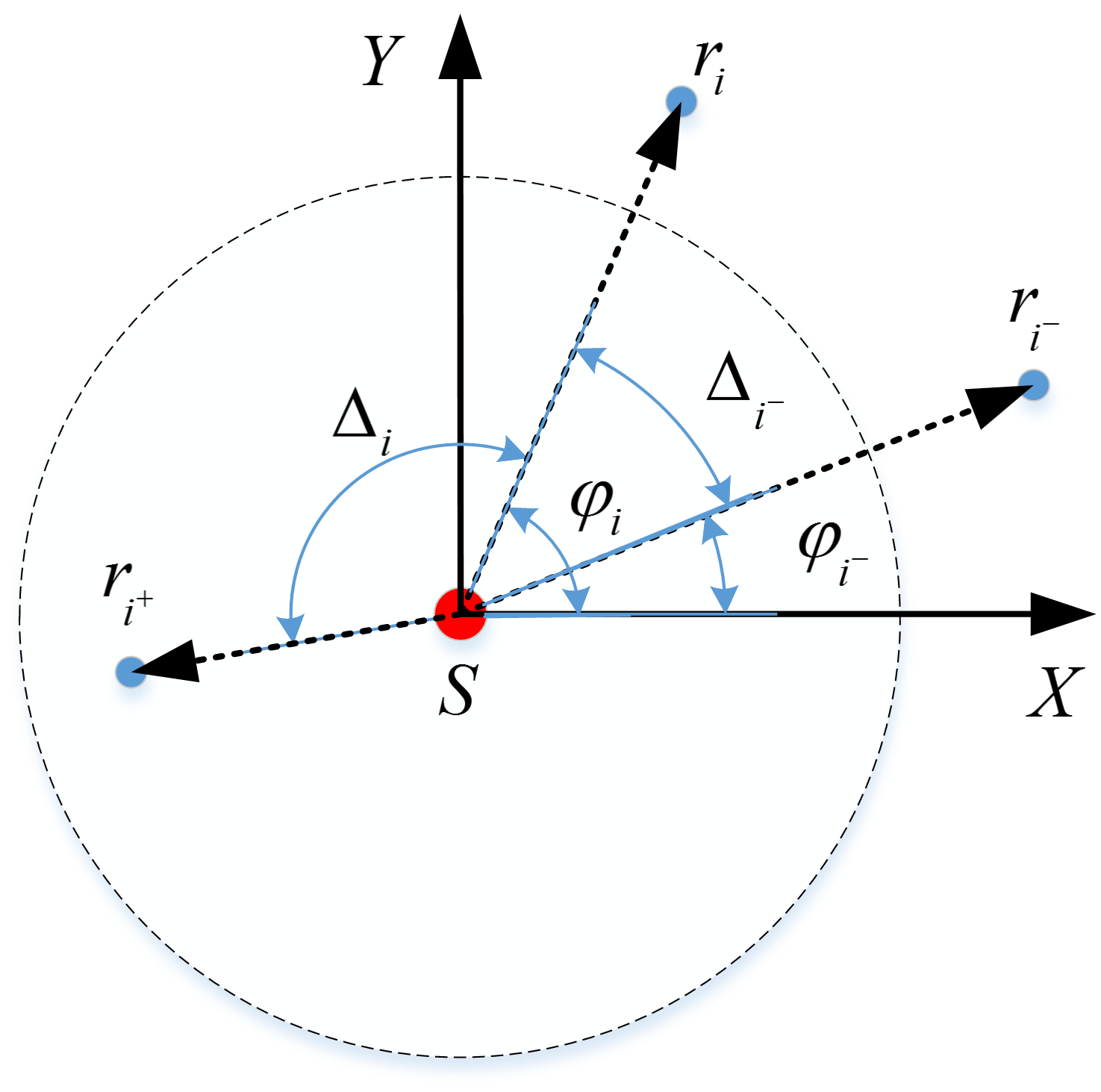}
		 \caption{Robots' projections and the target $S$ on the $X S Y$ plane.}
		 \label{circum_pic2}
		\end{minipage}\hfill
		\begin{minipage}{0.3\columnwidth}
		 \centering
		 \includegraphics[height=\linewidth]{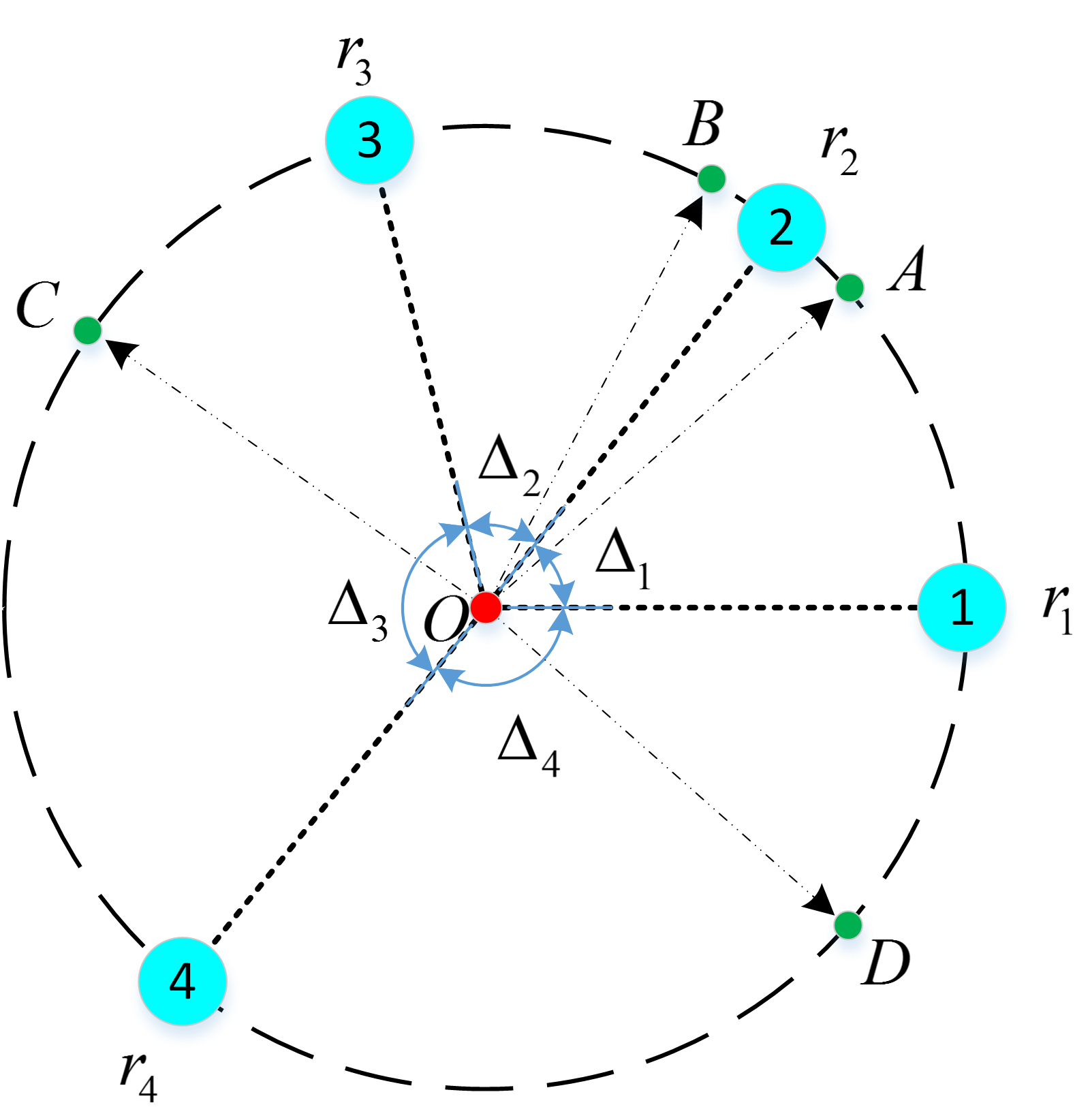}
		 \caption{The interpretation of the formation guideline.}
		 \label{fig:utility}
		\end{minipage}
	\end{figure}
%
%

\section{Problem Formulation} \label{sec2}

The research question is that a group of $n \; (n\ge 2)$ mobile robots, denoted by $r_{i} ,\; i=1,...,n$, encircle a target in 3D space with dynamic spacing on a circular formation. Note that $r_i$ is only used to represent the $i$th robot for convenience of narration; it does not correspond to any physical quantities. Suppose each mobile robot is modelled by a 3D kinematic point:
	\begin{equation} \label{eq1} 
	\dot{{\it p}}_{i} (t)={\it u}_{i} (t),\; \; i=1,...,n,
	\end{equation} 
where $u_{i} (t)$ is the control input to the robot $r_{i} $ and $p_{i} (t)\in {\mathbb{R}}^{3} $ is its position in the world reference frame $\mathcal{W}$. In this problem, robots are required to maintain on the same plane with the encircled target which is modelled by another kinematic point. Therefore, a (target) body reference frame $\mathcal{B}$ centred at the target $S$ is introduced (see Fig.~\ref{circum_pic1}). In addition, the  cylindrical coordinate system is preferred to the commonly used Cartesian coordinate system since the former itself embodies three elements of interest: the distance between the projection of the robot on the $X S Y $ plane to the target ($\rho$), the height relative to the $X S Y$ plane ($z$) and the angle between the $X$-axis and the line joining the projection of the robot on the $X S Y $ plane with the target ($\varphi $). The cylindrical coordinates for $r_{i}$ is denoted by $q_{i} =(\rho_{i} ,\varphi_{i} ,z_{i} )^{T} $ (see Fig. \ref{fg1}). To relate the cylindrical coordinates with the Cartesian coordinates, a vector function is defined as $
	{\it q}({\it p})=(\rho ({\it p}),\varphi ({\it p}),z({\it p}))^{T},
$
where ${\it p}\in {\mathbb{R}}^{3} $ is a generic vector with components $p_{x} ,p_{y} ,p_{z}$. $\rho ( p)=\sqrt{p_{x}^{2} +p_{y}^{2} } $, $\varphi (p)={\rm tan}^{{\rm -1}} (p_{y} /p_{x} )$ and $z(p)=p_{z}$.
Note that $\varphi \in [0,2\pi)$. The Jacobian matrix of the vector function will be used later, which is
	$
	J=\frac{\partial {\it q}}{\partial {\it p}^{T} } =
	\left[
	\begin{array}{ccc}
	 {\frac{p_{x} }{\sqrt{p_{x}^{2} +p_{y}^{2} } } }  & {\frac{p_{y} }{\sqrt{p_{x}^{2} +p_{y}^{2} } } } & {0} \\ 
	 {\frac{-p_{y} }{p_{x}^{2} +p_{y}^{2} } } & {\frac{p_{x} }{p_{x}^{2} +p_{y}^{2} } } & {0} \\ 
	 {0} & {0} & {1} 
	 \end{array}
	 \right].
	$
For better analysis, we label the robots in the counter-clockwise direction according to their initial (angular) positions ($\varphi_i$) in $\mathcal{B}$ as shown in Fig.~\ref{circum_pic2}. Note that the subscript $i^{-}$ and $i^{+}$ represent the indices of the neighboring robots of $r_i$ in the clockwise and counter-clockwise direction respectively. Especially, if $i=n$,  $i^{+} =1$, and if $i=1$, $i^{-}=n$. $\Delta _{i} > 0$ represents the difference between the angular positions of $r_{i^{+}}$ and that of $r_{i}$. In particular,
	\begin{equation} \label{eq7}
	\Delta_i =
	\begin{dcases}
	 \varphi_{i^+} -\varphi_i ,	  & i=1,\dots,n-1,\\ 
	 \varphi_1 - \varphi_n +2\pi, & i=n. 
	\end{dcases}
	\end{equation}
Also note that 
	\begin{equation} \label{eq8} 
	\sum_{i=1}^{n}\Delta_{i} = 2\pi,
	\end{equation} 

Before giving the definition of the circumnavigation problem with dynamic spacing, we propose the concept of \textit{utility}. Let $\mathbb{R}_{\ge 0}$ denote the field of non-negative real numbers.

\begin{definition}[Utility]
In a heterogeneous multi-robot system, given different kinds of robots, a robot's utility $\mu(t) \in \mathbb{R}_{\ge 0}$ is determined by a given criterion (such as its maximum movement speed). The utility reflects the weight of the robot in the circumnavigation process at time $t$.
\end{definition}

For example, suppose a robot's maximum movement speed is the criterion. Let $\mu_i(t)=\frac{v_{mi}(t)}{v_{M}}, i=1,\dots,n$, where $v_{mi}(t)$ is the maximum movement speed of $r_i$ and $v_{M}$ is the possible greatest movement speed in the heterogeneous multi-robot system. Then $\mu_i(t) \in [0, ~1], i=1,\dots,n$. When $\mu_i(t) = 0$, the robot $r_i$ cannot continue the circumnavigation process with other robots. In this case, its neighboring robots will neglect its role in the circumnavigation process. $\mu_i(t)$ will increase or decrease due to the enhancement or damage of the robot's locomotion capabilities. To explain directly how utilities are utilized to enable dynamic spacing among robots, we simply regard the utility of a robot to be proportional to its maximum movement speed. For simplicity of writing, the symbol $t$ is neglected from $\mu$ unless it causes confusion. The circumnavigation control problem based on utilities is defined as follows:

\begin{definition}[Circumnavigation Control Problem Based on Utilities]
In a heterogeneous multi-robot system composed of $n\; (n\ge 2)$ mobile robots, each of the robot's dynamics are modelled by \eqref{eq1}. Suppose $f_i :\mathbb{R}^{n+1} \rightarrow (0, 2 \pi), 
\; i=1,\dots,n,$ is a smooth function of time and the utilities of robots, which maps utilities to the final holistic formation spacing. Assume $\mathop{\lim }\limits_{t\to \infty } f_i(t, \mu_1,\dots,\mu_n)$ exists, the circumnavigation control problem based on utilities is to seek control laws satisfying the following asymptotic conditions:
	\begin{align}
	\mathop{\lim }\limits_{t \to \infty } \rho_{i} (t) &= \rho^{*} \label{equti1}  \\
	\mathop{\lim }\limits_{t\to \infty } \Delta _{i} (t) &=　\mathop{\lim }\limits_{t\to \infty } f_i \label{equti2} \\
	\mathop{\lim }\limits_{t\to \infty } \dot{\varphi }_{i} (t)　&=\omega ^{*}  \label{equti3}\\
	\mathop{\lim }\limits_{t\to \infty } z_{i} (t)　&=z^* \label{equti4}
	\end{align}
for $i=1,...,n$. Here, $\mu_i >0$, $\rho^{*}>0$, $\omega^{*} \in \mathbb{R}$ and $z^{*} \in \mathbb{R}$.  $\rho^{*}$, $\omega^{*}$ and $z^{*}$ denote the circumnavigation radius, the angular speed and the circumnavigation height respectively. 
\end{definition}

In this paper, it is required that all robots and the target remain in the same plane in the end. Therefore, the default value of $z^{*}$ is 0. However, $z^*$ can be different for different robots.

Equation \eqref{equti2} manifests that the final formation spacing is not specified manually as proposed in \cite{wang2013} or \cite{weijia2017}, but instead, it is determined by the $f_i$ function, which will be referred to as $f$ function for simplicity. Note that $f_i$ function depends on the utilities of other robots instead of calculating by each robot alone. The advantage of eliciting the $f$ function is that the spacing among robots can be dynamically adjusted corresponding to the variations of robots' utilities. 
%
%

	
The expression of the $f$ function is determined by a \textit{formation guideline}. It is proposed under specific physics background representing the relationship between the utilities of robots and the final formation spacing. In this paper, we suppose that multiple heterogeneous robots circumnavigate a target and try to prevent it from fleeing. In Fig. \ref{fig:utility}, four robots $r_1,\dots,r_4$ rotate around a target denoted by $O$. Suppose that the target is intelligent enough to determine the \textit{best fleeing points} denoted by $A$, $B$, $C$ and $D$ in the figure. Obviously, the best fleeing points are related to the utility (i.e., the maximum movement speeds) of robots. The position of $A$, for instance, is calculated by $\angle AOr_2 = \frac{\mu_2}{\mu_1+\mu_2}$. We also suppose that the probability of capturing the target by a robot is inversely proportional to the time spent on moving from its initial position along the circular trajectory at its maximum speed to the best fleeing point. Therefore, the formation guideline can be defined as
	
	\begin{formation} \label{fg1}
		In the final circumnavigation formation formed by robots, when the target tries to escape via any of the best fleeing point, the two robots adjacent to the best fleeing point have the same probability of capturing the target.
	\end{formation}
	
To understand the above formation guideline, taking Fig. \ref{fig:utility} for example, it means the travelling time for $r_1$ and $r_2$ to arrive at the best fleeing point $A$ along the circular trajectory at their maximum speeds (i.e., $\mu_1$ and $\mu_2$ resp.) is the same, or the travelling time for $r_2$ to arrive at $A$ or $B$ along the circular trajectory at its maximum speed (i.e., $\mu_2$) is identical, and hence, the probability of capturing the target is equal. Following this, it can be derived that for $i=1,\dots,n$, $
		\frac{\mu_i}{\mu_i + \mu_{i^+}} \Delta_{i} = \frac{\mu_i}{\mu_i + \mu_{i^-}} \Delta_{i-}.
$
%
%
According to this equation, the relationship between the final desired formation spacing and the utilities is
	$
	\Delta_{1}:\Delta_{2}:\dots:\Delta_{n} = (\mu_1+\mu_2):(\mu_2+\mu_3):\dots:(\mu_n+\mu_1).
	$ 
Therefore, given $\mu_1,\dots,\mu_n$, the formation spacing can be determined, and the $f$ function is expressed as follows:
	\begin{equation}　\label{eqf1}
	f_i(t, \mu_{1},\dots,\mu_{n}) = \frac{\mu_i + \mu_{i^+}}{\sum_{k=1}^{n} \mu_k}\pi.
	\end{equation}

Similarly, another formation guideline can be defined as follows:

	\begin{formation} \label{fg2}
	In the final circumnavigation formation formed by robots, the spacing among robots is proportional to their utilities.
	\end{formation}
	
Formation Guideline \ref{fg2} can be interpreted in this way: as the utility of a robot increases, its complementary effects on its neighboring robots increase, and therefore the spacing between them should be increased. Taking Fig. \ref{fig:utility} for example, the formation guideline indicates that, for $i=1,\dots,n$, 
	$
	\frac{\Delta_{i}}{\mu_i}=\frac{\Delta_{i^-}}{\mu_{i^-}}.
	$
Accordingly, given $\mu_1,\dots,\mu_n$, the final desired formation spacing is determined. Hence the $f$ function is
	$
	f_i(t, \mu_{1},\dots,\mu_{n}) = \frac{2 \mu_i}{\sum_{k=1}^{n} \mu_k}\pi.
	$

\begin{remark}
	Note that formation guidelines only reflect the relationship between the utilities of robots and the final formation spacing; it does not determine the utilities of robots. 
\end{remark}

\section{Utility-based Circumnavigation Control Algorithm} \label{sec3}

First we define a rotational matrix $R_{b}$, which is the representation of the body reference frame $\mathcal{B}$ with respect to the world reference frame $\mathcal{W}$. Therefore, the following equation calculates the cylindrical coordinates of $r_i$ in the frame $\mathcal{B}$: $
	q_{i} ={\it q}(R_{b}^{T} ({\it p}_{i} -{\it p}_{b} )) ,
$
where ${\it p}_{i} $ and ${\it p}_{b} $ are the Cartesian coordinates of $r_i$ and the target in the frame $\mathcal{W}$ respectively. Then the derivative of  $q_i$ is the dynamics of robots in the cylindrical coordinates, which is $
	\dot{{\it q}}_{i} ={\it J}_{i} [\dot{R}_{b}^{T} ({\it p}_{i} -{\it p}_{b} )+R_{b}^{T} (\dot{{\it p}}_{i} -\dot{{\it p}}_{b} )] ,
$
where ${\it J}_{i} $ is the Jacobian matrix as shown in Section \ref{sec2}, i.e. ${\it J}_{i} =\frac{\partial {\it q}}{\partial {p}^{T} } \left|\begin{array}{c} {} \\ {{p}=R_{b}^{T} ({\it p}_{i} -{\it p}_{b} )} \end{array}\right. $. Note that $\det (J)=\frac{1}{\sqrt{p_{x}^{2} +p_{y}^{2}}}$ as long as $p_{x}^{2} +p_{y}^{2} \ne 0$. This means ${\it J}_{i} $ is invertible as long as the distance between $r_i$ and the target is non-zero. This condition can always be guaranteed since the initial positions of the robots and the target do not coincide, and by designing appropriate control algorithms, the distance can be guaranteed to be non-zero all the time. By letting 
	\begin{equation} \label{eq15} 
	{\it u}_{i} =\dot{{\it p}}_{i} =\dot{{\it p}}_{b} +R_{b} ({\it J}_{i} ^{-1} {\it v}_{i} -\dot{R}_{b}^{T} ({\it p}_{i} -{\it p}_{b} )) ,
	\end{equation} 
we can switch our focus to the new control input in the cylindrical coordinates ${\it v}_{i} =\dot{{\it q}}_{i} =(\dot{\rho }_{i} ,\dot{\varphi }_{i} ,\dot{z}_{i} )^{T}$ \cite{franchi2016}. The advantage of transforming to this control input is that we can control $\rho _{i} $, $\varphi _{i}$ and $z_{i}$ separately, which are the three main variables in the circumnavigation problem. 

\textbf{Notations.} 
For positive integers $m$ and $n$, $M_{n}$ and $M_{m \times n}$ are a set of all $n$-by-$n$ and $m$-by-$n$ real matrices. If all the entries in a matrix is nonnegative, this matrix is called nonnegative. We denote $I_{d}$ as the $d \times d$ identity matrix. $ \boldsymbol{1}$ and $\boldsymbol{0}$ are vectors of all 1's or 0's of suitable dimensions respectively. The underlying directed graph (or digraph) of a nonnegative matrix $M\in M_{n}$, denoted by $\mathcal{G}(M)$, is the directed graph with the vertex set $\{ v_{i} \} ,i\in \{ 1,...,n\} $, such that there is a directed edge in $\mathcal{G}(M)$ from $v_{j}$ to $v_{i}$ if and only if $m_{ij} \ne 0$ \cite{horn2012}. A directed graph is called strongly connected if for every pair of vertices, there is a directed path between them \cite{mesbahi2010}.  The following is a preliminary result related to any strongly connected digraph. 

\begin{lemma}[{Theorem 3 of \cite{olfati2004}}]
Assume $G$ is a strongly connected digraph with Laplacian $L$ satisfying $Lw_{r} =\boldsymbol{0}$, $w_{l} ^{T} L=\boldsymbol{0}$ and $w_{l} ^{T} w_{r} =1$. Then
	$
	R=\mathop{\lim }\limits_{t\to \infty } \exp (-Lt)=w_{r} w_{l} ^{T} \in M_{n} .
	$
\end{lemma}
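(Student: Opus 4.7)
The plan is to exploit the spectral structure of the Laplacian $L$ of a strongly connected digraph and then read off the limit of the matrix exponential via its Jordan decomposition. Two facts drive everything: (a) $0$ is an algebraically simple eigenvalue of $L$, and (b) every other eigenvalue of $L$ has strictly positive real part. Once (a) and (b) are in hand the identification $R = w_r w_l^T$ is a mechanical reading of the Jordan form.

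For (a), recall that the Laplacian has the shape $L = D - A$, where $A$ is the weighted adjacency matrix of $G$ and $D$ is the diagonal matrix of (out- or in-) weighted degrees chosen so that $L$ has zero row sums, giving the prescribed kernel vector $w_r$. Strong connectivity of $G$ is equivalent to irreducibility of $A$, so the Perron--Frobenius theorem for irreducible nonnegative matrices yields that the spectral radius of $A$ is a simple eigenvalue with strictly positive left and right eigenvectors. Translating back to $L$ via the diagonal shift $L = D - A$ shows that $0$ is algebraically simple with one-dimensional right kernel spanned by $w_r$ and one-dimensional left kernel spanned by $w_l$. For (b), I would invoke Gershgorin's disk theorem: the $i$th disk $\{\, z : |z - \ell_{ii}| \le \ell_{ii}\,\}$ sits in the closed right half-plane and meets the imaginary axis only at the origin, so every eigenvalue of $L$ has nonnegative real part, and a short irreducibility argument rules out purely imaginary nonzero eigenvalues.

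With (a) and (b) established, I would write $L = P J P^{-1}$ in Jordan canonical form with $J = \mathrm{diag}(0, J')$, where $J'$ collects the Jordan blocks for the nonzero eigenvalues. Then $\exp(-Lt) = P \, \mathrm{diag}\!\left(1, \exp(-J' t)\right) P^{-1}$. Because every eigenvalue of $J'$ has strictly positive real part, the polynomial-times-exponential entries of $\exp(-J' t)$ all decay to $0$ as $t \to \infty$. The limit $R$ is therefore the rank-one spectral projection onto the $0$-eigenspace of $L$ along the invariant complement provided by the other Jordan blocks. Taking the first column of $P$ to be $w_r$ and the first row of $P^{-1}$ to be $w_l^T$, which is consistent with the biorthogonality of left and right eigenvectors for distinct eigenvalues and pinned down precisely by the normalization $w_l^T w_r = 1$, one reads off $R = w_r w_l^T$.

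The main obstacle is step (a)--(b): one has to convert the combinatorial hypothesis of strong connectivity into the algebraic statements that the zero eigenvalue is both geometrically \emph{and} algebraically simple and that no purely imaginary nonzero eigenvalues exist. This is the only place where strong connectivity actually enters, and it is cleanest through Perron--Frobenius applied to the irreducible nonnegative matrix $A$ (or, equivalently, to a shifted matrix $cI - L$ for large $c$, which is irreducible and nonnegative). The subsequent passage to the limit in the Jordan form is then entirely routine.
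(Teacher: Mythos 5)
The paper itself gives no proof of this lemma: it is imported as a black box, cited as Theorem~3 of the consensus reference \cite{olfati2004}, and used later only to conclude $e_\varphi(t)\to w_rw_l^Te_\varphi(0)$. So there is nothing in the paper to compare against; judged on its own, your argument is correct and is in substance the standard proof (and essentially the one in the cited source): Gershgorin places the spectrum of $L$ in the closed right half-plane with the imaginary axis touched only at the origin, Perron--Frobenius applied to the irreducible nonnegative matrix $cI-L$ gives algebraic simplicity of the zero eigenvalue for a strongly connected digraph, and the Jordan decomposition then identifies $\lim_{t\to\infty}\exp(-Lt)$ with the rank-one spectral projection $w_rw_l^T$, the normalization $w_l^Tw_r=1$ being exactly the condition $(P^{-1}P)_{11}=1$. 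Two small points of hygiene: the phrase ``translating back to $L$ via the diagonal shift $L=D-A$'' is imprecise when $D$ is not a scalar matrix (your parenthetical $cI-L$ version is the one that actually works in general, and in this paper's application $A$ is row stochastic so $D=I$ and either works); and Gershgorin alone already excludes nonzero purely imaginary eigenvalues, so the extra irreducibility argument there is redundant --- irreducibility is needed only for the simplicity of the zero eigenvalue.
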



\begin{theorem} \label{theorem1}
Consider a multi-robot system with robot dynamics described by \eqref{eq1} and \eqref{eq15}, by introducing the control input ${\it v}_{i} =\dot{{\it q}}_{i} =(\dot{\rho }_{i} ,\dot{\varphi }_{i} ,\dot{z}_{i} )^{T}$ into \eqref{eq15}, where
	\begin{align}
	\dot{\rho }_{i}    &=k_{\rho } (\rho ^{*} -\rho _{i} ),                              \label{eq17}  \\
	\dot{z}_{i}        &=-k_{z} z_{i},                                                   \label{eq18}  \\
	\dot{\varphi }_{i} &=\omega ^{*} +k_{\varphi } (\bar{\varphi }_{i} -\varphi _{i} ).  \label{eq19}  
	\end{align}
Note that $k_{\rho } $, $k_{z} $ and $k_{\varphi }$ are positive gains, and 
	\begin{equation} \label{eq4.13}
	\bar{\varphi }_{i} =
	\begin{dcases}
		\varphi _{i^-} +\frac{\mu_{i^-}+\mu_i}{\mu_{i^+}+2\mu_i+\mu_{i^-}}　(\Delta_i +\Delta _{i^-}), \; i=2,3,\dots,n, \\ 
		\varphi_{i^-} +\frac{\mu_{i^-}+\mu_i}{\mu_{i^+}+2\mu_i+\mu_{i^-}}(\Delta_i +\Delta_{i^-}) -2\pi, \; i=1,
	\end{dcases} 
	\end{equation}
where $\mu_i, i=1,\dots,n,$ is the utility of the robot $r_i$ and it is piecewise constant. If the $f$ function is shown as \eqref{eqf1} (Formation Guideline \ref{fg1}), the circumnavigation control problem based on utilities encoded by \eqref{equti1}, \eqref{equti2}, \eqref{equti3} and \eqref{equti4} can be solved with exponential convergence speed. 
\end{theorem}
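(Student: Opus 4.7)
The plan is to exploit the decoupling of \eqref{eq17}, \eqref{eq18}, and \eqref{eq19} in cylindrical coordinates. The radial and altitude equations are scalar linear ODEs whose closed-form solutions $\rho_i(t)-\rho^* = (\rho_i(0)-\rho^*)e^{-k_\rho t}$ and $z_i(t) = z_i(0)e^{-k_z t}$ establish \eqref{equti1} and \eqref{equti4} with exponential rates $k_\rho$ and $k_z$ directly; all of the substance therefore lies in \eqref{eq19}.

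For the angular dynamics, I would first rewrite the weighted average in \eqref{eq4.13} in the uniform form $\bar\varphi_i - \varphi_i = c_i\Delta_i - (1-c_i)\Delta_{i^-}$, where $c_i := (\mu_{i^-}+\mu_i)/(\mu_{i^+}+2\mu_i+\mu_{i^-}) \in (0,1)$; the $-2\pi$ in the $i=1$ branch of \eqref{eq4.13} precisely cancels the $+2\pi$ hidden in $\Delta_n$, so the expression is cyclically symmetric with no lingering case analysis. Differentiating $\Delta_i = \varphi_{i^+}-\varphi_i$ and substituting \eqref{eq19} eliminates $\omega^*$ and produces an autonomous linear system $\dot\Delta = k_\varphi M\Delta$ on $\Delta=(\Delta_1,\dots,\Delta_n)^T$, whose only nonzero entries in row $i$ are $M_{i,i}=-(1-c_{i^+}+c_i)$, $M_{i,i^+}=c_{i^+}$, and $M_{i,i^-}=1-c_i$. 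Two structural identities fall out: $\mathbf{1}^T M=0$ by a direct bookkeeping check on column sums, which together with \eqref{eq8} gives the invariant $\sum_i\Delta_i(t)\equiv 2\pi$; and $M\Delta^*=0$, because the required algebraic identity $\Delta_i^*/\Delta_{i^-}^* = (1-c_i)/c_i$ reduces on both sides to $(\mu_i+\mu_{i^+})/(\mu_{i^-}+\mu_i)$, with $\Delta^*$ given by \eqref{eqf1}.

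The main obstacle is showing that $\ker M$ is exactly the line through $\Delta^*$ and that every other eigenvalue of $M$ has strictly negative real part. To this end I would set $L := -M^T$ and verify that $L$ is the out-degree graph Laplacian of the digraph in which vertex $k$ has outgoing edges to $k^-$ and $k^+$ with the strictly positive weights $c_k$ and $1-c_{k^+}$ respectively; this is a bidirectional cycle and hence strongly connected. The cited lemma from \cite{olfati2004} then applies to $L$ with right null vector $\mathbf{1}$ and left null vector $\Delta^*/(2\pi)$ (normalized via $\mathbf{1}^T\Delta^*=2\pi$). Transposing, $\exp(k_\varphi Mt) = (\exp(-k_\varphi Lt))^T$ converges exponentially to $\Delta^*\mathbf{1}^T/(2\pi)$, so $\Delta(t)\to\Delta^*$ at rate $k_\varphi\lambda^*$, where $\lambda^*$ is the smallest real part among the nonzero eigenvalues of $L$; this is \eqref{equti2}. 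Finally, $\bar\varphi_i-\varphi_i$ is continuous and linear in $\Delta$ and vanishes at $\Delta^*$, so substituting back into \eqref{eq19} yields $\dot\varphi_i-\omega^*\to 0$ exponentially, establishing \eqref{equti3}. The overall exponential rate is $\min(k_\rho,k_z,k_\varphi\lambda^*)$; piecewise-constancy of the utilities $\mu_i$ reduces the full argument to this autonomous analysis on each constant interval.
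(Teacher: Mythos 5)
Your proposal is correct, and it reaches the conclusion by a genuinely different decomposition than the paper. The paper keeps the analysis in the angular coordinates $\varphi$: it writes $\bar{\varphi}=A\varphi+b$ with $A$ a row-stochastic adjacency matrix of a directed ring, forms the error signal $e_\varphi=\bar\varphi-\varphi=-L_p\varphi+b$ with $L_p=I_n-A$, derives $\dot e_\varphi=-k_\varphi L_p e_\varphi$, and applies the same Olfati-Saber lemma to $L_p$; the limit $e_\varphi\to\boldsymbol{0}$ then requires exhibiting the explicit product-form left eigenvector $w_L$ and checking that $w_l^Tb=0$, after which the spacing ratios $\Delta_i/\Delta_{i^-}=(\mu_i+\mu_{i^+})/(\mu_i+\mu_{i^-})$ are solved together with \eqref{eq8} to recover \eqref{eqf1}. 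You instead pass immediately to the spacing coordinates $\Delta$ and the system $\dot\Delta=k_\varphi M\Delta$ --- which is precisely the matrix $M_\Delta$ the paper only introduces later, in the proof of Theorem~\ref{theorem2} --- and apply the lemma to $L=-M^T$, identifying the right and left null vectors as $\boldsymbol{1}$ and $\Delta^*/(2\pi)$ via the detailed-balance identity $c_i\Delta_i^*=(1-c_i)\Delta_{i^-}^*$. This buys two things: the affine term $b$ disappears entirely (so there is no $w_l^Tb=0$ verification), and the limit $\Delta\to\Delta^*$ is read off directly as \eqref{equti2} without the separate fixed-point computation; the cost is the (comparably easy) checks that $\boldsymbol{1}^TM=0$, $M\Delta^*=0$, and that $-M^T$ is the Laplacian of a strongly connected bidirectional cycle. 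Your final step recovering \eqref{equti3} from $\bar\varphi_i-\varphi_i=c_i\Delta_i-(1-c_i)\Delta_{i^-}\to 0$ is sound, as is the treatment of $\rho_i$, $z_i$, and the piecewise-constant utilities, which matches the paper's.
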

		
\begin{proof}
	It is obvious that  \eqref{eq17} and  \eqref{eq18} do not rely on the states of other robots, and they are basically P control laws with reference input $\rho ^{*}$ and 0 respectively. So according to the classical control theory, $\rho _{i} $ and $z_{i} $ will converge exponentially to $\rho ^{*} $ and 0 respectively. 
	
	Since $\mu_i, ~i=1,\dots,n,$ is piecewise constant, it is obvious that $\mathop{\lim }\limits_{t\to \infty } f_i$ exists. We define $\bar{\varphi}=[\bar{\varphi }_{1} \;...\; \bar{\varphi }_{n} ]^{T} $ and $\varphi =[\varphi _{1} \;...\; \varphi _{n} ]^{T} $, so  \eqref{eq19} and  \eqref{eq4.13} can be written into compact forms respectively as follows:
		\begin{equation} \label{eq22} 
		\dot{\varphi }=\omega ^{*} \boldsymbol{1}+k_{\varphi } (\bar{\varphi }-\varphi ),
		\end{equation} 
		\begin{equation} \label{eq23} 
		\bar{\varphi }=A\varphi +b,
		\end{equation} 
		%
		\begin{equation}
		\label{eq24} 
		A=\left[
		\begin{array}{ccccccc}
		{0} & {\frac{\mu_n + \mu_1 }{\mu_2 + 2 \mu_1 + \mu_n} } & {0} & {\ldots } & {0} & {0} & {\frac{\mu_1 + \mu_2 }{\mu_2 + 2 \mu_1 + \mu_n } } \\ 
		{\frac{\mu_2 + \mu_3 }{\mu_3 + 2 \mu_2 + \mu_1} } & {0} & {\frac{\mu_1 + \mu_2 }{\mu_3 + 2 \mu_2 + \mu_1 } } & {\ldots } & {0} & {0} & {0} \\ 
		{\vdots } & {\vdots } & {\vdots } & {\vdots } & {\vdots } & {\vdots } & {\vdots } \\  
		{\frac{\mu_{n-1} + \mu_n }{\mu_1 + 2 \mu_n + \mu_{n-1} } } & {0} & {0} & {\ldots } & {0} & {\frac{\mu_n + \mu_1 }{\mu_1 + 2 \mu_n + \mu_{n-1} } } & {0}
		\end{array}
		\right]
		\end{equation}
		where $A\in M_{n}$ is shown as \eqref{eq24}, and $b=2\pi \left[
		\begin{array}{ccccc} {\frac{-(\mu_1 + \mu_2) }{\mu_2 + 2 \mu_1 + \mu_n } } & {0} & {\ldots } & {0} & {\frac{\mu_{n-1} + \mu_n }{\mu_1 +2 \mu_n + \mu_{n-1} } } \end{array}
		\right]^T.
		$ 
	During each time period where $\mu_i$ is constant, $A$ and $b$ are constant matrix and vector respectively. Note that matrix $A$ is a row stochastic matrix \cite{horn2012} and furthermore, it could be considered as the adjacency matrix \cite{mesbahi2010} corresponding to a weighted directed ring denoted by $\mathcal{G}(A)$. It can be readily verified that $\mathcal{G}(A)$ is strongly connected. Next we define the \textit{error signal} as
		\begin{equation} \label{eq26} 
		e_{\varphi } =\bar{\varphi }-\varphi =(A-I_{n} )\varphi +b=-L_{p} \varphi +b, 
		\end{equation} 
	where $L_{p} =I_{n} -A$, which is the Laplacian matrix of $\mathcal{G}(A)$. Since $L_p$ is constant at each time period, the derivative of $e_{\varphi } $ is $\dot{e}_{\varphi } =-L_{p} \dot{\varphi }$. By substituting  \eqref{eq22} and  \eqref{eq26} into this equation, we further obtain the \textit{error dynamics} as
		\begin{equation} \label{eq27} 
		\dot{e}_{\varphi } =-\omega ^{*} L_{p} \boldsymbol{1}-k_{\varphi } L_{p} e_{\varphi } =-k_{\varphi } L_{p} e_{\varphi } .
		\end{equation} 
	Note that \textbf{1} is the right eigenvector associated with the zero eigenvalue of $L_{p} $, so $-\omega ^{*} L_{p} \boldsymbol{1}=0$. The solution to  \eqref{eq27} is $e_{\varphi } (t)=\exp (-k_{\varphi } L_{p} t)e_{\varphi }(0)$. According to Lemma 1 and also note that $k_{\varphi } >0$ only affects the convergence speed but not the convergence value, we have $\mathop{\lim }\limits_{t\to \infty } e_{\varphi } (t)=w_{r} w_{l} ^{T} e_{\varphi}(0)$, where $L_{p} w_{r} =\boldsymbol{0},\; w_{l} ^{T} L_{p} =\boldsymbol{0}$ and $w_{l} ^{T} w_{r} =1$. By substituting  \eqref{eq26} into this equation, we obtain the following:
		\begin{equation} \label{eq28} 
		\mathop{\lim }\limits_{t\to \infty } e_{\varphi } (t)=w_{r} (-w_{l} ^{T} L_{p} \varphi +w_{l} ^{T} b)=w_{l} ^{T} bw_{r}.
		\end{equation}
	Let $w_{r} =\boldsymbol{1} $ and $
		w_{l} =\frac{w_{L}}{\sum_{w_L}},
	$
	where the $i^{th}$ entry of $w_{L} $ is
	\[
		\left[ w_{L_i} =(\mu_{i^+} + 2 \mu_i + \mu_{i^-})\mathop{\prod }\limits_{j=1,j\ne i,i^{-} }^{n} (\mu_{j} + \mu_{j^+}) \right],
	\]
	and $\sum_{w_L}=\sum_{i=1}^{n}w_{L_i}$. It can be easily verified that $w_{l}^{T} $and $w_{r} $ are the left and right eigenvector of the Laplacian matrix $L_{p} $ associated with the zero eigenvalue respectively, and $w_{l} ^{T} w_{r} =1$. Therefore,  \eqref{eq28} becomes $\mathop{\lim }\limits_{t\to \infty } e_{\varphi } (t)=\boldsymbol{0}$, which means that the difference between the desired angular position and the actual angular position of each robot vanishes to zero exponentially, or 
	$
		\mathop{\lim }\limits_{t\to \infty } \varphi (t)=\mathop{\lim }\limits_{t\to \infty } \bar{\varphi }(t). 
	$
	According to  \eqref{eq22}, the circumnavigation speed of each robot converges to the desired angular speed $\omega ^{*} $. In addition, under this condition, $\bar{\varphi }_{i} $ is replaced by $\varphi _{i} $ in  \eqref{eq4.13} and therefore, for robots with indices $i=2,...,n$, the equation $\varphi _{i} =\varphi _{i^-} +\frac{\mu_{i^-}+\mu_i}{\mu_{i^+}+2\mu_i+\mu_{i^-}}　(\Delta_i +\Delta _{i^-}) $ further becomes 
	$
		\frac{\Delta _{i} }{\Delta _{i^{-} } } =\frac{\mu_{i} + \mu_{i^+} }{\mu_{i} + \mu_{i^-}}. 
	$
	This means a sequence of equations $\frac{\Delta _{n} }{\Delta _{n-1} } =\frac{\mu_n + \mu_1 }{\mu_{n-1} + \mu_n } ,...,\frac{\Delta _{2} }{\Delta _{1} } =\frac{\mu_2 + \mu_3 }{\mu_1 + \mu_2 } $. Assuming $\Delta _{1} =k(\mu_1 + \mu_2) ,\; k\ne 0$, we have $\Delta _{i} =k (\mu_{i} + \mu_{i^+}), i=2,...,n$. According to \eqref{eq8}, it follows that $2 k \sum_{i=1}^{n} \mu_i = 2 \pi$, and hence $k=\pi / \sum_{i=1}^{n} \mu_i$. Therefore, $\Delta _{i} =(\mu_{i} + \mu_{i^+}) \pi / \sum_{i=1}^{n} \mu_i = f_i(t, \mu_{1},\dots,\mu_{n}) ,\; i=1,...,n$. So the formation spacing expressed by \eqref{equti2} and \eqref{eqf1} can be achieved.      
\end{proof}
	
\begin{remark} 
Since \eqref{eq17}, \eqref{eq18}, \eqref{eq19} and \eqref{eq27} typically admit a linear system, the convergence is global and exponential. In fact, for the convergence of $e_\varphi$, a Lyapunov function can be defined as $V(e_\varphi)=e_\varphi^T P e_\varphi$, where $P=\text{diag}\{w_l\}$, so the global and exponential convergence can also be proved using the Lyapunov theorem. In addition, if $\mu_1+\mu_2=\dots=\mu_{n-1}+\mu_n=\mu_n+\mu_1$, \eqref{eqf1} becomes $	f_i = 2 \pi / n$, that is, the spacing among robots is equal. Then  \eqref{eq4.13} becomes
	\begin{equation} \label{33} 
	\left\{\begin{array}{l} {\bar{\varphi }_{1} =\frac{\varphi _{n} +\varphi _{2} -2\pi }{2} }, \\ {\bar{\varphi }_{i} =\frac{\varphi _{i-1} +\varphi _{i+1} }{2} ,\; i=2,...,n-1}, \\ 
	{\bar{\varphi }_{n} =\frac{\varphi _{n-1} +\varphi _{1} +2\pi }{2}. } \end{array}\right.  
	\end{equation} 
Combined with  \eqref{eq17},  \eqref{eq18} and  \eqref{eq19}, they can solve circumnavigation problem with equal spacing. 
\end{remark}

\begin{remark}
It is interesting to note that when there are only two robots, i.e., $ n=2 $, the subscripts of robots satisfy $ i^- = i^+$.  Equation~\eqref{eq4.13} is simplified to $\bar{\varphi }_{1} = \varphi _{2} - \pi, ~ \bar{\varphi }_{2} = \varphi _{1} + \pi$, and \eqref{eqf1} becomes $f_i=\pi, ~i=1,2$. This indicates that the formation spacing is fixed no matter how robots' utilities change (except 0); these two robots will always position on the ends of the diameter of the circular trajectory. Furthermore, $A$ and $b$ degrade to $ A=\left[\begin{array}{cc} {0} & {1} \\ {1} & {0} \end{array}\right] $ and $ b=\left[\begin{array}{c} {-\pi } \\ {\pi } \end{array}\right] $ respectively.
\end{remark}

\begin{remark}
In the definition of circumnavigation control problem based on utilities, \eqref{equti2} contains the utilities of all robots. However, it can be seen from \eqref{eq4.13} that each robot only needs to obtain the utilities of its two neighboring robots. Moreover, it should be noted that robots do not know what the holistic expected formation is; the actual formation (or spacing) among robots adapt dynamically to the variations of the local utilities of neighboring robots. In addition, when a robot joins or leaves the formation, according to \eqref{eq19} and \eqref{eq4.13}, the spacing among robots will adjust dynamically through local update of the utilities of neighboring robots. To sum up, the utility-based circumnavigation control algorithm does not rely on the number of robots, and it is able to dynamically adjust the formation spacing dependent on the change of utilities. The control algorithm is distributed, and in this way, it achieves the global aim described by \eqref{equti2}.
\end{remark}

\begin{remark} \label{remark5}
	When $\mu_\theta=0$, the robot $r_\theta$ has quitted from the circumnavigation process, and therefore the communication topology has changed. The change of communication topology means the indices of the neighboring robots alter accordingly. When $\mu_2=0$, for example, the neighboring robots of $r_3$ change from $r_2$ and $r_4$ to $r_1$ and $r_4$. In this way, the circumnavigation control algorithm based on utilities can well adapt to the cases where there are local variations on utilities or where robots join or quit from the formation. The formation spacing can adjust dynamically based on the selected formation guideline, achieving distributed formation reconfiguration.
\end{remark}

Another problem that is worth considering is whether robots preserve their initial orders during the whole circumnavigation process. This means any robot will not overtake or be overtaken by its neighbours, which guarantees that they will not collide with each other if they are regarded as mass points. Before introducing the next theorem, the definition of a Metzler matrix \cite{minc1988nonnegative} is given. For a real matrix $M = [m_{ij}] \in M_n$, if all its off-diagonal elements are non-negative, i.e., $m_{ij} \ge 0, i \ne j$, $M$ is a Metzler matrix.
	
\begin{theorem} \label{theorem2}
During the circumnavigation process, robots always keep their initial orders in the formation. In other words, $\Delta_i(t) > 0, i=1,\dots,n$, for $t \ge 0$.
\end{theorem}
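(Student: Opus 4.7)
My plan is to derive a closed autonomous dynamical system for the vector of spacings $\Delta = (\Delta_1,\dots,\Delta_n)^T$ directly from the control law, show that its system matrix is Metzler, and then invoke a standard positive-invariance argument to conclude that the open positive orthant is forward invariant, so that $\Delta_i(t)>0$ for all $t\ge 0$ provided $\Delta_i(0)>0$ (which holds by the counter-clockwise labeling convention introduced before \eqref{eq7}).

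The key step is the reduction to a Metzler system. Using $\varphi_{i^-} = \varphi_i - \Delta_{i^-}$ (with the cyclic adjustment by $2\pi$ absorbed for $i=1$), I rewrite \eqref{eq4.13} as
\begin{equation*}
\bar{\varphi}_i - \varphi_i = \alpha_i \Delta_i - \beta_i \Delta_{i^-}, \qquad \alpha_i = \tfrac{\mu_{i^-}+\mu_i}{\mu_{i^+}+2\mu_i+\mu_{i^-}}, \quad \beta_i = 1-\alpha_i,
\end{equation*}
uniformly in $i$. Substituting into \eqref{eq19} and differentiating $\Delta_i = \varphi_{i^+}-\varphi_i$ (respectively $\Delta_n=\varphi_1-\varphi_n+2\pi$, whose constant vanishes on differentiation), the $\omega^*$ terms cancel and I obtain
\begin{equation*}
\dot{\Delta}_i = k_\varphi\bigl[\alpha_{i^+}\Delta_{i^+} - (\alpha_i+\beta_{i^+})\Delta_i + \beta_i\Delta_{i^-}\bigr], \qquad i=1,\dots,n,
\end{equation*}
i.e.\ $\dot{\Delta}=k_\varphi M\Delta$, where $M$ has non-negative off-diagonal entries $\alpha_{i^+},\beta_i>0$ and negative diagonal entries. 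Hence $M$ is a Metzler matrix, and a quick consistency check confirms $\boldsymbol{1}^T M = \boldsymbol{0}$, preserving the constraint $\sum_i\Delta_i=2\pi$ from \eqref{eq8}.

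To promote non-negativity to strict positivity, I use a Nagumo-style boundary argument rather than the bare matrix exponential. Suppose, for contradiction, that there is a first time $t_0>0$ at which some $\Delta_i(t_0)=0$; by minimality all $\Delta_j(t_0)\ge 0$. Evaluating the dynamics at $t_0$,
\begin{equation*}
\dot{\Delta}_i(t_0) = k_\varphi\bigl[\alpha_{i^+}\Delta_{i^+}(t_0) + \beta_i\Delta_{i^-}(t_0)\bigr] \ge 0.
\end{equation*}
If this quantity were strictly positive, $\Delta_i$ would be increasing at $t_0$, contradicting it having just dropped to zero from a positive value; therefore $\Delta_{i^+}(t_0) = \Delta_{i^-}(t_0) = 0$, and iterating around the cycle (which is strongly connected) forces $\Delta_j(t_0)=0$ for every $j$, contradicting $\sum_j\Delta_j=2\pi$. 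I finish by noting that the utilities are piecewise constant, so at each switching instant the argument restarts with strictly positive initial $\Delta$ inherited from the previous interval.

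The main obstacle I anticipate is precisely this passage from non-negative to strictly positive invariance: the Metzler structure alone only yields $\Delta(t)\ge 0$ componentwise, and ruling out the boundary requires combining the structure of the coefficients (both $\alpha_{i^+}$ and $\beta_i$ are strictly positive whenever the utilities are positive), the conservation law $\sum_i\Delta_i=2\pi$, and the cyclic strong connectedness of the underlying graph $\mathcal{G}(A)$ already established in the proof of Theorem~\ref{theorem1}.
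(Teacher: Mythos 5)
Your proof is correct and its core is the same as the paper's: both reduce the claim to the closed linear system $\dot{\Delta}=k_\varphi M\Delta$ with $M$ Metzler, and your coefficients $\alpha_{i^+}$, $-(\alpha_i+\beta_{i^+})$, $\beta_i$ are exactly the entries of the paper's matrix $M_\Delta$ in \eqref{M_delta}. You diverge only in the final positivity step. The paper finishes in one line: $\exp(k_\varphi M_\Delta t)$ is entrywise nonnegative because $M_\Delta$ is Metzler, and $\Delta(0)>0$, hence $\Delta(t)>0$. As you observe, a nonnegative matrix times a positive vector is in general only guaranteed nonnegative; the strict inequality the paper asserts actually rests on the (unstated but true) fact that the exponential of a Metzler matrix has strictly positive diagonal entries, since $\exp(M_\Delta t)=e^{-ct}\exp((M_\Delta+cI_n)t)$ and $\exp((M_\Delta+cI_n)t)\ge I_n$ entrywise for $c$ large enough. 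You instead close that gap with a Nagumo-type first-crossing argument that combines the strict positivity of the off-diagonal weights, the conservation law $\boldsymbol{1}^T M=\boldsymbol{0}$ (hence $\sum_i\Delta_i\equiv 2\pi$), and the cyclic connectivity of the ring. Both finishes are valid: yours is more self-contained and explicitly diagnoses the nonnegative-versus-positive issue, at the cost of invoking more structure than is strictly needed; the paper's is shorter once the positive diagonal of the matrix exponential is supplied. Your remark about restarting the argument at each switching instant of the piecewise-constant utilities makes explicit something the paper leaves implicit.
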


\begin{proof}
	According to \eqref{eq7}, \eqref{eq19} and \eqref{eq4.13}, for $i=1,\dots,n$, it follows that
	
		\begin{equation}
		\begin{split}\label{dotDelta}
		\dot{\Delta}_i {} = {} & k_\varphi \left[ \frac{\mu_{i} + \mu_{i^+}}{\mu_{i^{*}} + 2 \mu_{i^+} + \mu_{i} } \Delta_{i^+} - \left( \frac{\mu_{i^+} + \mu_{i^{*}} }{\mu_{i^{*}}+ 2 \mu_{i^+} + \mu_{i}}　\right. \right. \\ &+ \left. \left. \frac{\mu_{i^-} + \mu_{i} }{\mu_{i^+} + 2 \mu_{i} + \mu_{i^-} } \right) \Delta_{i} + \frac{\mu_{i} + \mu_{i^+} }{\mu_{i^+} + 2 \mu_{i} +　\mu_{i^-} } \Delta_{i^-} \right],
		\end{split}
		\end{equation}
	where $i^*$ represents $(i^+)^+$, which is the index of the second adjacent robot for the robot $r_i$ in the counter-clockwise direction. Let $\Delta = [\Delta_1 \dots \Delta_n]^T$, then \eqref{dotDelta} can be rewritten as $
			\dot{\Delta} = k_\varphi M_\Delta \Delta,
	$
		\begin{equation} \label{M_delta}
		\begin{split}
			&M_{\Delta}=\\
			&\left[
			\begin{array}{cccc}
	{\frac{-(\mu_2 + \mu_3) }{\mu_3 + 2 \mu_2 + \mu_1}+\frac{-(\mu_n + \mu_1) }{\mu_2 + 2 \mu_1 + \mu_n} } & {\frac{\mu_1 + \mu_2 }{\mu_3 + 2 \mu_2 + \mu_1 }}  & {\ldots } & {\frac{\mu_1 + \mu_2 }{\mu_2 + 2 \mu_1 + \mu_n } } \\ 
			{\frac{\mu_2 + \mu_3 }{\mu_3 + 2 \mu_2 + \mu_1 } } & {\frac{-(\mu_3 + \mu_4) }{\mu_4 + 2 \mu_3 + \mu_2}+\frac{-(\mu_1 + \mu_2) }{\mu_3 + 2 \mu_2 + \mu_1} } &  {\ldots } & {0} \\ 
			{\vdots } & {\vdots } & {\vdots } & {\vdots } \\ 
			{\frac{\mu_n + \mu_1 }{\mu_2 + 2 \mu_1 + \mu_n } } & {0} & {\ldots } & {\frac{-(\mu_1 + \mu_2) }{\mu_2 + 2 \mu_1 + \mu_n}+\frac{-(\mu_{n-1} + \mu_n) }{\mu_1 + 2 \mu_n + \mu_{n-1}} } 
			\end{array}
			\right].
		\end{split}
		\end{equation}
	 where $M_\Delta$ is shown in \eqref{M_delta}. Therefore, the solution of $\Delta(t)$ is $
			\Delta(t) = \exp(k_\varphi M_\Delta t) \\ \Delta(0).
	$
	Since $M_\Delta$ is a Metlzer matrix, it has been proved that $\exp(k_\varphi M_\Delta t)$ is a non-negative matrix \cite{minc1988nonnegative}. In addition, due to $\Delta(0) > 0$, it follows that $\Delta(t) > 0, ~t \ge 0$, which means that robots always keep their initial orders in the formation.
\end{proof}

\begin{remark}
	The significance of this theorem is that it provides a preliminary result for collision avoidance. In other words, if robots are treated as mass points, then collision will not happen since they always keep their initial orders. For real robots with geometric shape, given sufficiently large spacing, the collision will not happen, but this will need further investigation.
\end{remark}

Similarly, with Formation Guideline \ref{fg2}, the circumnavigation control problem based on utilities can also be solved.

%
%

\section{Experimental Results and Analysis} \label{sec4}

Although it is claimed that formation guidelines correspond to specific physics backgrounds, in the experiment, we do not try to reproduce the specific scenarios. This is because the emphasis here is the stability of the circumnavigation control algorithm based on utilities, and how the global formation spacing reacts dynamically to the variation of the utilities. In the experiments, robots' utilities are supposed to be proportional to its maximum movement speed. However, how the utilities are calculated from the maximum movement speeds is not the interest of the study. Instead, the variation of the utilities are manually specified. Readers can think of an increase in the utilities as an update of robots' locomotion capabilities, while the decrease means the deterioration of performance due to worn-out or damage of robots. The first experiment is a simulation with Simulink and the other is a real-robot experiment using soccer-playing robots.

\subsection{Simulation with Simulink}

In this simulation, four robots are used and Formation Guideline \ref{fg2} is adopted. For demonstrating the dynamic change in formation spacing when robots' utilities vary, the utilities of the robot $r_1$, $r_3$ and $r_4$ remain 1 throughout the whole circumnavigation process, while the utility of the robot $r_2$ varies according to a piecewise constant function. That is $\mu_2=2, ~0 \le t <5; ~\mu_2=0, ~5 \le t <10; ~\mu_2=0.5, ~t \ge 10$.
%
%
For convenience, the three time ranges are denoted by Stage 1, 2 and 3 respectively. Note that at Stage 2, the utility of the robot $r_2$ is zero, which means it does not continue the circumnavigation process but retreats to a corner. In this experiment, robots' initial positions are randomly chosen. Simulation parameters are $\rho^*=2$ m, $w^*=1$ rad/s and $k_\varphi=k_\rho=k_z=2$. 
	\begin{figure}[!htb]
		\centering %
		\begin{subfigure}[b]{.4\columnwidth}
			\includegraphics[width=\linewidth]{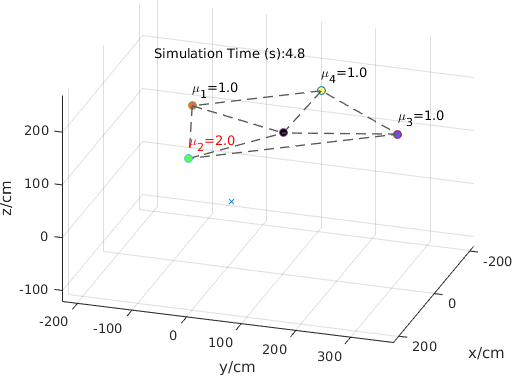}
			\caption{}
			\label{util_sim0}
		\end{subfigure}
		\begin{subfigure}[b]{.4\columnwidth}
			\includegraphics[width=\linewidth]{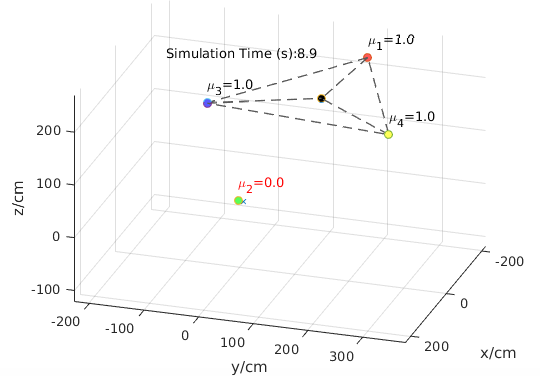}
			\caption{}
			\label{util_sim1}
		\end{subfigure}
		\begin{subfigure}[b]{.4\columnwidth}
			\includegraphics[width=\linewidth]{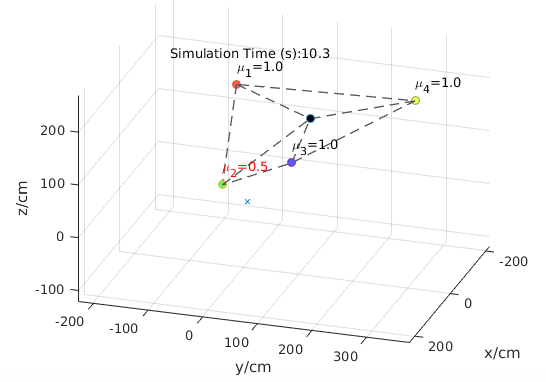}
			\caption{}
			\label{util_sim2}
		\end{subfigure}
		\begin{subfigure}[b]{.4\columnwidth}
			\includegraphics[width=\linewidth]{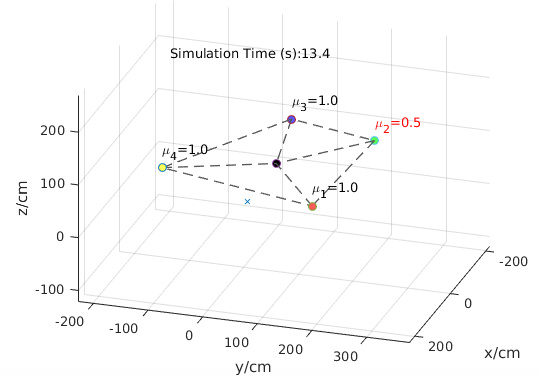}
			\caption{}
			\label{util_sim3}
		\end{subfigure}
		\caption{The simulation with Simulink. (a)-(d) are the circumnavigation process at 4.8 s (Stage 1), 8.9 s (Stage 2), 10.3 s (Stage 3) and 13.4 s (Stage 3) respectively.}
		\label{utility_sim}
	\end{figure}
	\begin{figure}[!htb]
		\centering %
		\begin{subfigure}[b]{.4\columnwidth}
			\includegraphics[width=\linewidth]{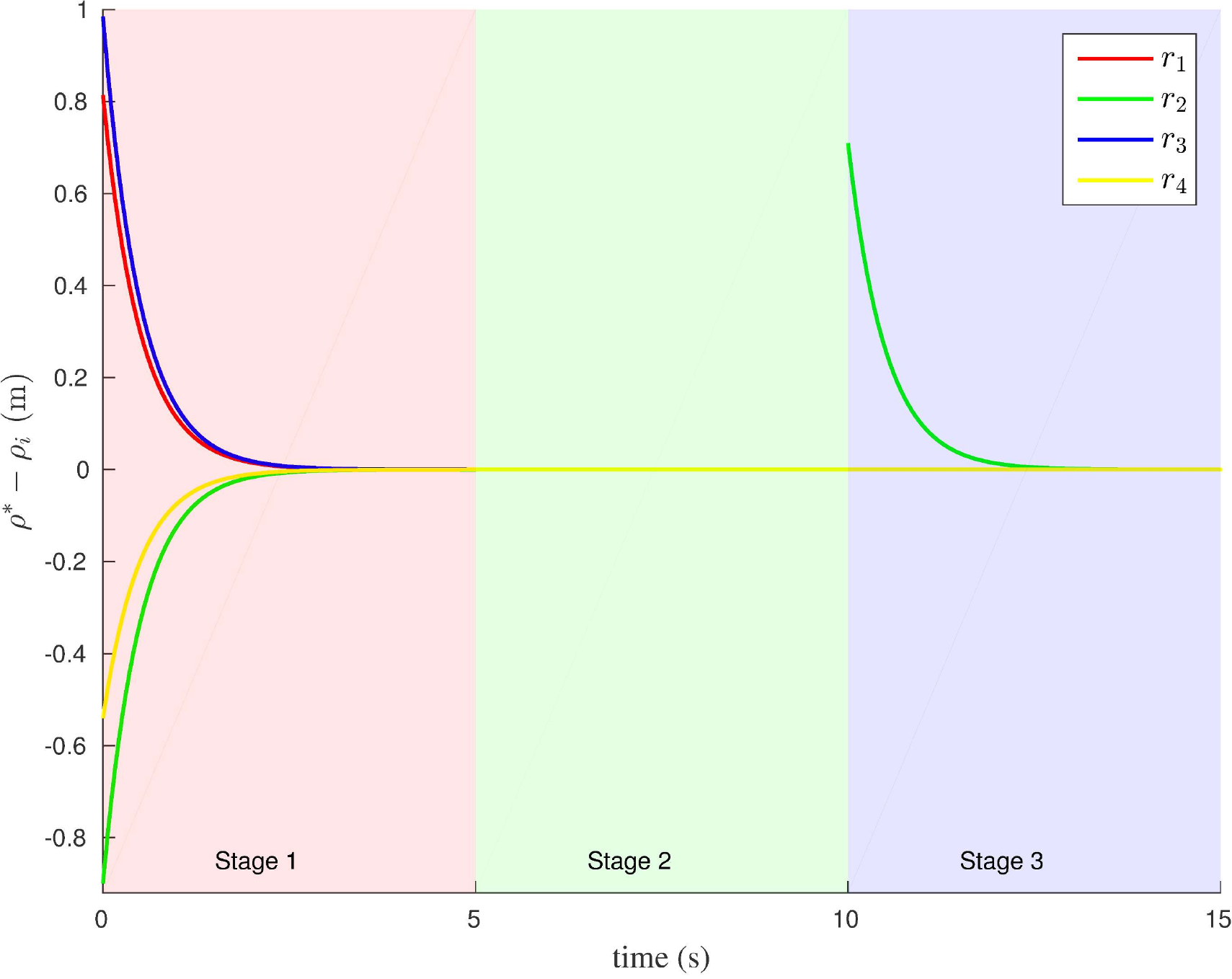}
			\caption{}
			\label{util_sim_rou}
		\end{subfigure}
		\begin{subfigure}[b]{.4\columnwidth}
			\includegraphics[width=\linewidth]{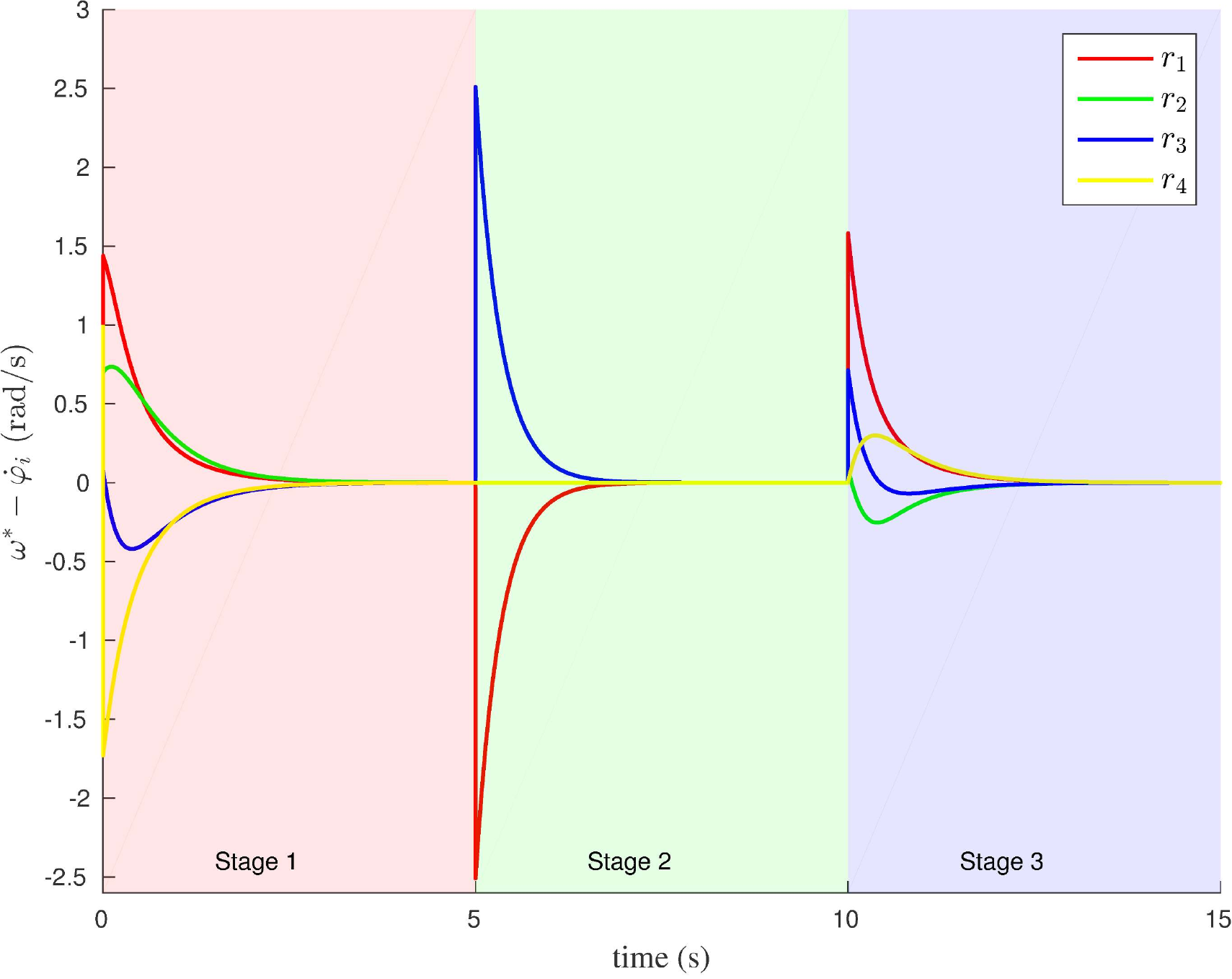}
			\caption{}
			\label{util_sim_w}
		\end{subfigure}
		\begin{subfigure}[b]{.4\columnwidth}
			\includegraphics[width=\linewidth]{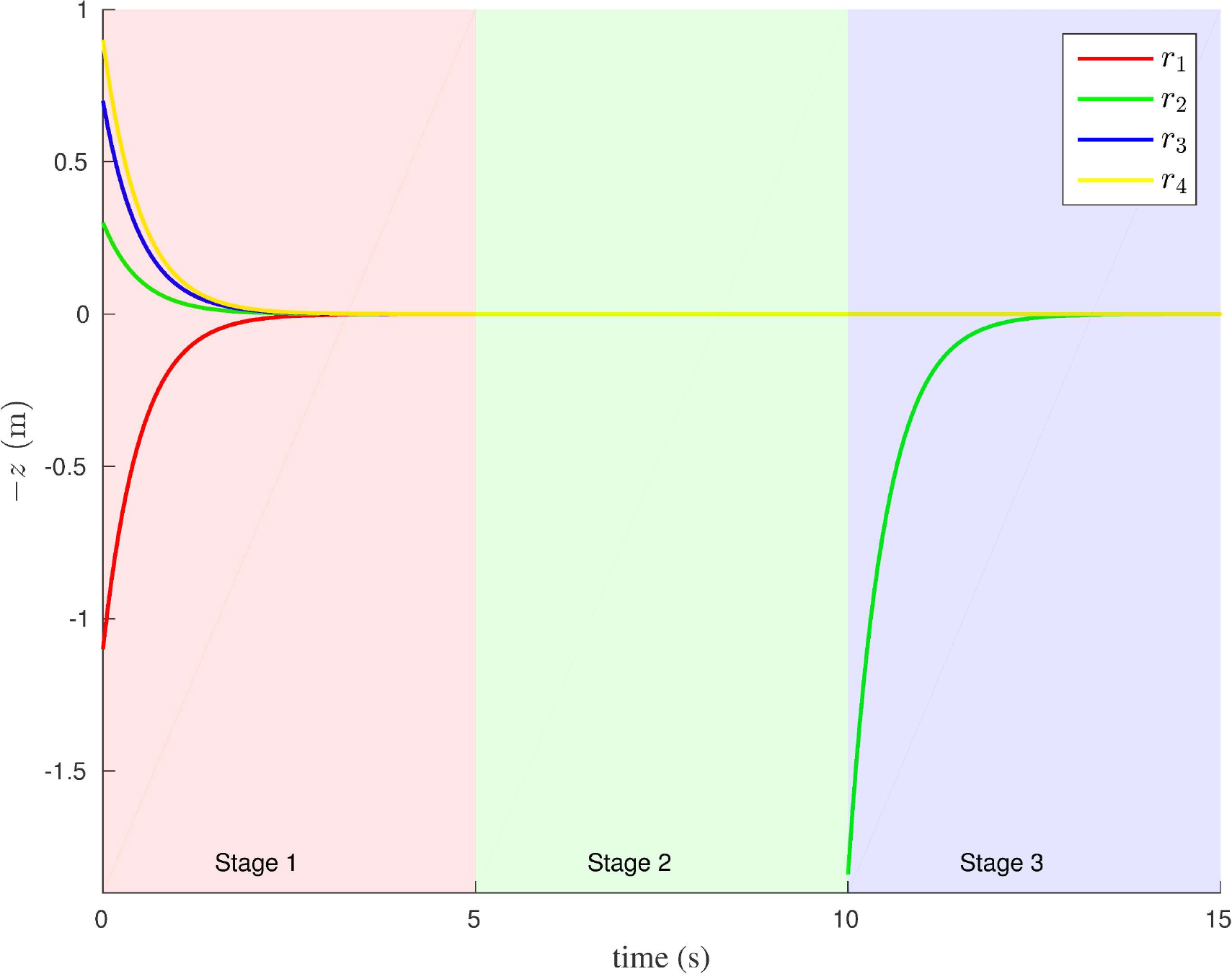}
			\caption{}
			\label{util_sim_z}
		\end{subfigure}
		\begin{subfigure}[b]{.4\columnwidth}
			\includegraphics[width=\linewidth]{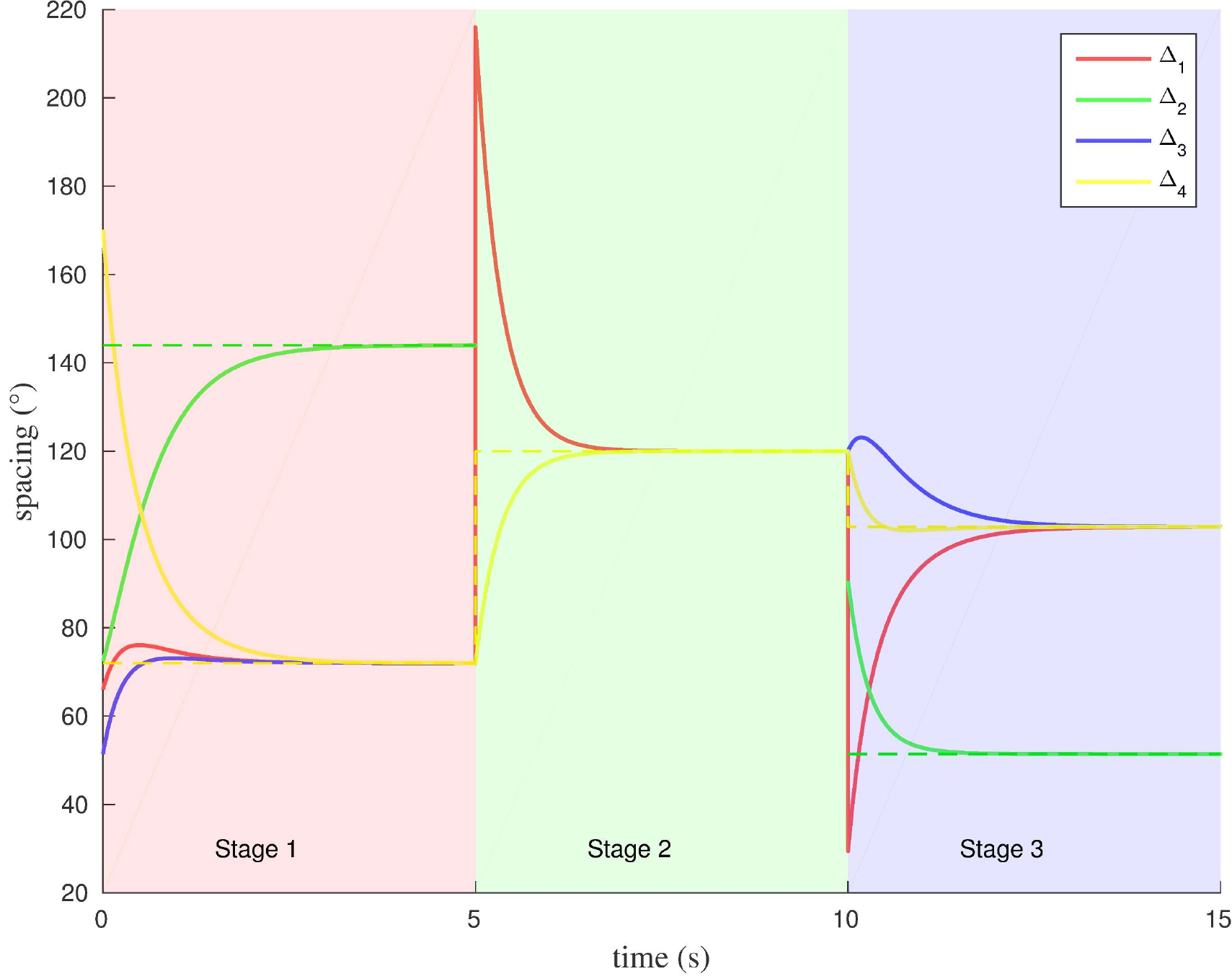}
			\caption{}
			\label{util_sim_spac}
		\end{subfigure}
		\caption{The data plots of the simulation. (a)-(c) are the error signals of the circumnavigation process. They are the plots for $\rho^*-\rho_i$, $\omega^*-\dot{\phi_i}$ and $-z$ respectively. (d) is the curves of the real spacing and the desired spacing. The red, green, blue and yellow dashed lines are the desired spacing for robots $r_1$, $r_2$, $r_3$ and $r_4$ respectively. }
		\label{utility_sim_data}
	\end{figure}
The circumnavigation process is illustrated in Fig. \ref{utility_sim}. The black dot in the center is the target to be encircled. Dashed lines connecting four robots represented by red, green, blue and yellow dots indicate the \textit{formation shape} and communication topologies intuitively. Dashed lines connecting the target and the robots demonstrate the circumnavigation radii. At Stage 1, four robots form a stable formation and circumnavigate the target (see Fig. \ref{util_sim0}). Then $r_2$ leaves the formation at Stage 2 (see Fig. \ref{util_sim1}). At the beginning of Stage 3, $r_2$ joins the formation (see Fig. \ref{util_sim2}). Note that the communication topologies have changed from Stage 1 to Stage 2 and from Stage 2 to Stage 3. Four robots form a stable formation again with different spacing in comparison with Stage 1 in the end (see Fig. \ref{util_sim3}).

The data plots of the simulation are shown in Fig. \ref{utility_sim_data}. Since $r_2$ quits from the formation during Stage 2, the corresponding data is not plotted. It can be seen from Fig. \ref{utility_sim_data} that although the changes of the utility of $r_2$ lead to the deviation of curves from the expected values, the circumnavigation error signals converge to zero exponentially during the three stages (see Fig. \ref{util_sim_rou}, Fig. \ref{util_sim_w} and Fig. \ref{util_sim_z} resp.). The spacing among robots changes according to the variation of the utilities. However, the spacing converges to the desired ones at the end of each stage (see Fig. \ref{util_sim_spac}).  Although the simulation experiment involves only four robots, it should be noted that the control algorithm only utilizes the information of neighboring robots, therefore it can be extended to a system with any number of robots. 
	\begin{table}[htb]
	 \renewcommand{\arraystretch}{1.3}
	 \centering
	 \begin{minipage}[t]{\columnwidth} 
	 \caption{The utilities and the corresponding spacing at four stages.}
	 \label{tab:utility}
	 \centering
	 	  \begin{tabular}{ccc}
	    \toprule[1pt]
	    \bfseries Stage  & Utilities  &  Desired spacing \footnote{~Under Formation Guideline \ref{fg1}, the desired spacing (unit: degree) are calculated according to the robots' utilities. However, this information is not needed by robots during the circumnavigation process.} \\
	    \midrule[0.5pt]
	    Stage 1 & $[20~~1~~20~~20]^T$  & $[62~~62~~118~~118]^T$ \\
	    Stage 2 & $[20~~20~~20~~20]^T$ & $[90~~90~~90~~90]^T$ \\
	    Stage 3 & $[20~~50~~20~~20]^T$ & $[114~~114~~66~~66]^T$ \\
	    Stage 4 & $[20~~0~~20~~20]^T$  & $[120~~120~~120]^T$ \footnote{~ At this stage, the robot $r_2$ quits from the formation.} \\
	    \bottomrule[1pt]
	  \end{tabular}
	 \end{minipage}
	\end{table}

\subsection{Experiment with Soccer Robots}

In this experiment, four soccer-playing robots \cite{Xiong2016,yao2015simulation} are used and Formation Guideline \ref{fg1} is adopted. Since the soccer-playing robots have omnidirectional movement abilities and they can reach any given velocity instantly, their dynamics can be regarded as the first-integrator model given in \eqref{eq1}. In addition, an omnidirectional vision system is equipped on each robot with algorithms for self-localization and the recognition of a yellow football \cite{lu2011robust}. The position and velocity of the robot itself and the position and velocity of the football are obtained by its own omnidirectional vision system. Moreover, robots are only allowed to receive information from its neighboring robots and the information is transmitted using wireless communication. 
%
%
	\begin{figure}[!htb]
		\centering %
		\begin{subfigure}[b]{.32\columnwidth}
			\includegraphics[width=\linewidth]{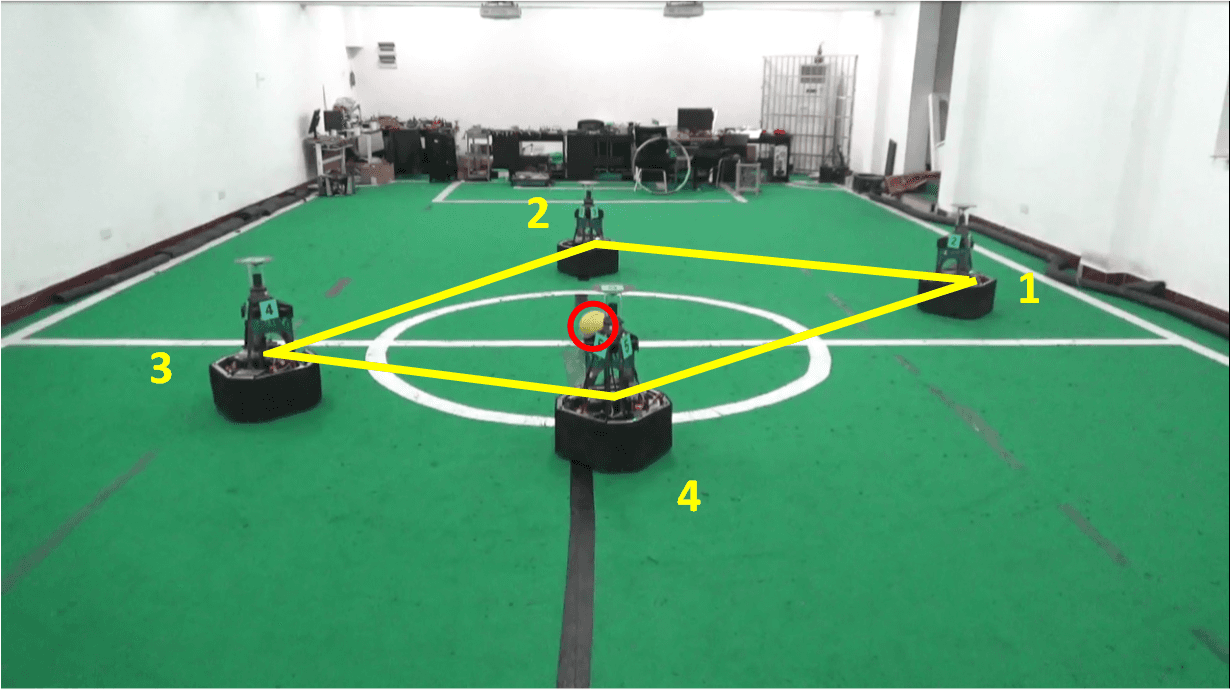}
			\caption{}
			\label{util_real0}
		\end{subfigure}
		\begin{subfigure}[b]{.32\columnwidth}
			\includegraphics[width=\linewidth]{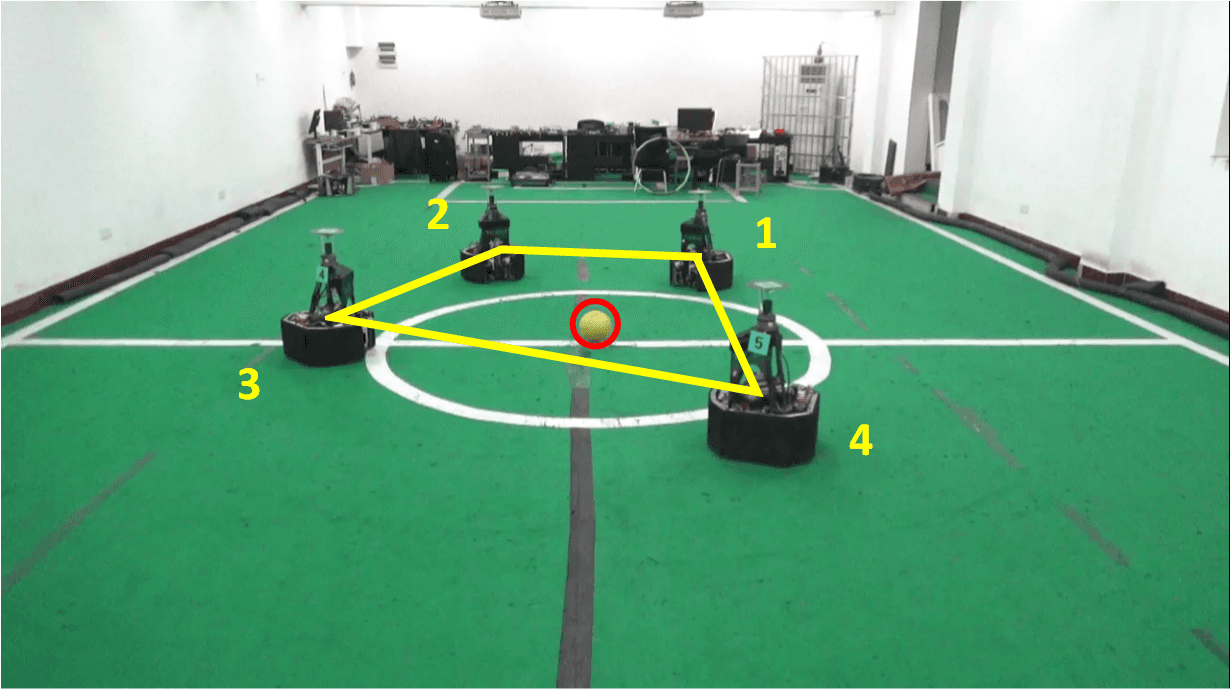}
			\caption{}
			\label{util_real1}
		\end{subfigure}
		\begin{subfigure}[b]{.32\columnwidth}
			\includegraphics[width=\linewidth]{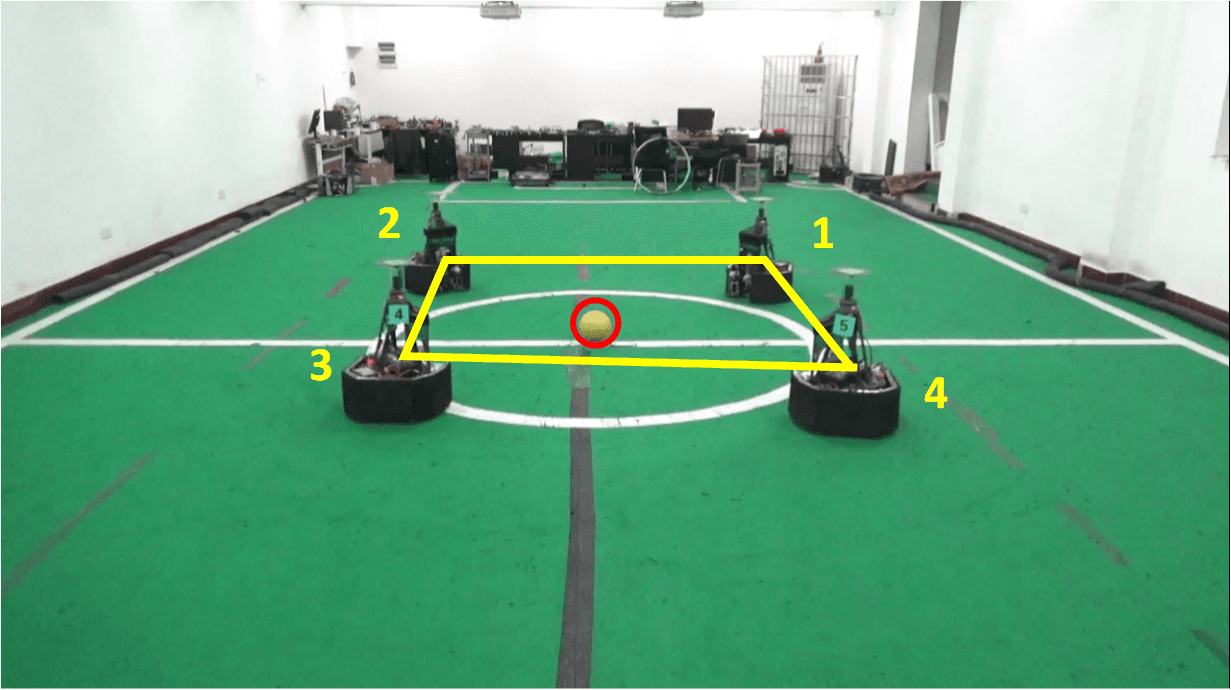}
			\caption{}
			\label{util_real2}
		\end{subfigure}
		\begin{subfigure}[b]{.32\columnwidth}
			\includegraphics[width=\linewidth]{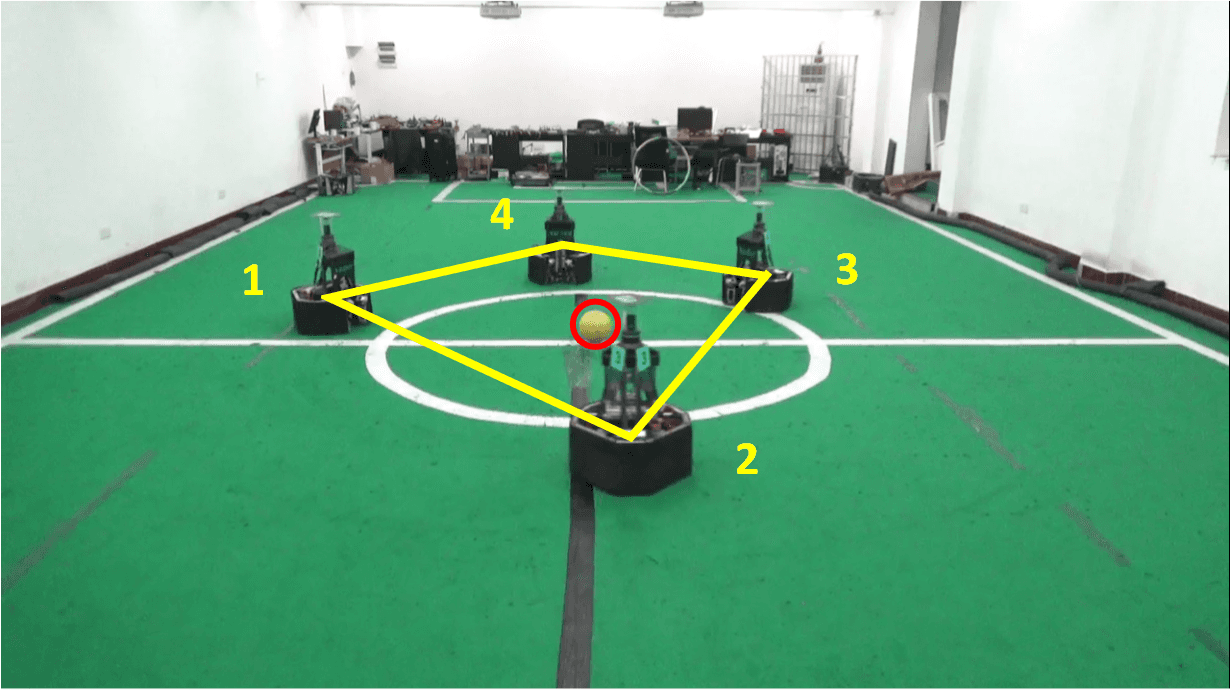}
			\caption{}
			\label{util_real3}
		\end{subfigure}
		\begin{subfigure}[b]{.32\columnwidth}
			\includegraphics[width=\linewidth]{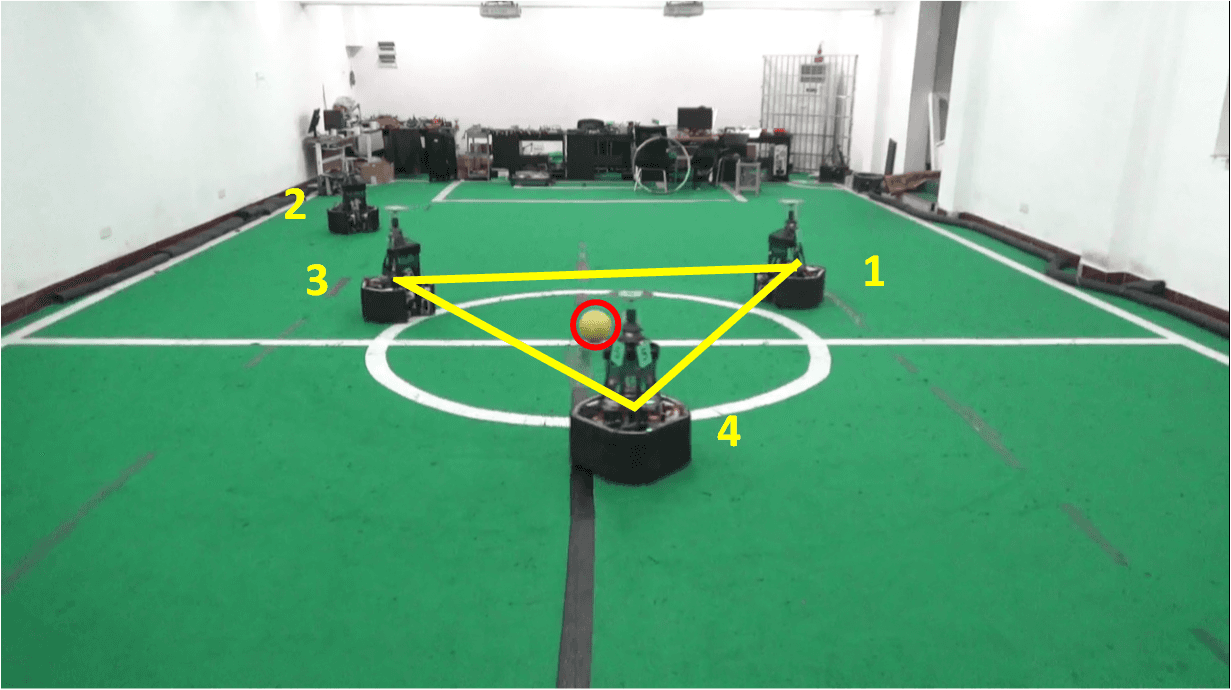}
			\caption{}
			\label{util_real4}
		\end{subfigure}
		\caption{The real robot experiment. (a) are the initial robot positions; (b)-(e) are positions at 4 s (Stage 1), 19 s (Stage 2), 39 s (Stage 3) and 61 s (Stage 4) respectively.}
		\label{utility_real}
	\end{figure}
	\begin{figure}[!htb]
		\centering %
		\begin{subfigure}[b]{.4\columnwidth}
			\centering
			\includegraphics[width=\linewidth]{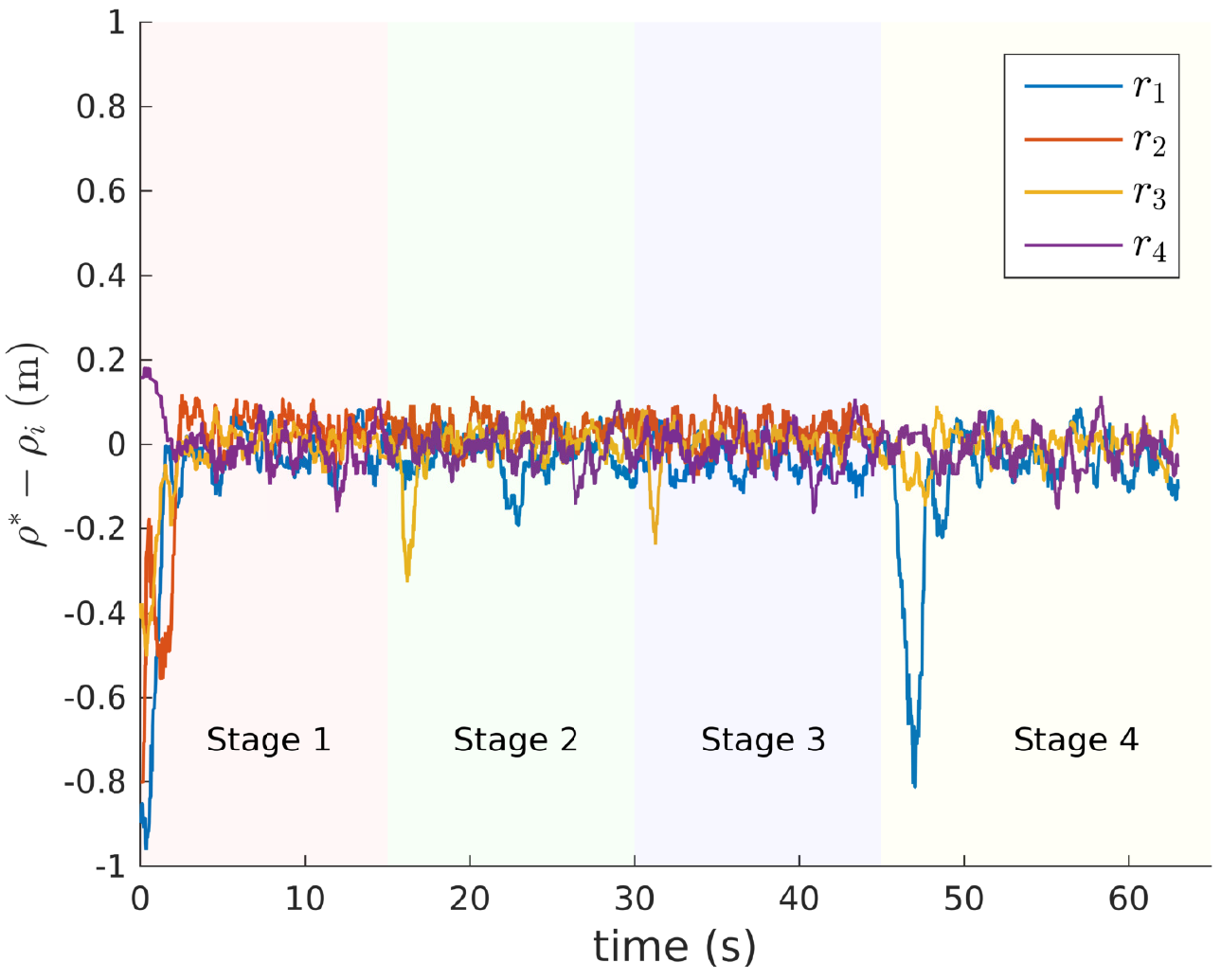}
			\caption{}
			\label{util_real_rou}
		\end{subfigure}
		\begin{subfigure}[b]{.4\columnwidth}
			\centering
			\includegraphics[width=\linewidth]{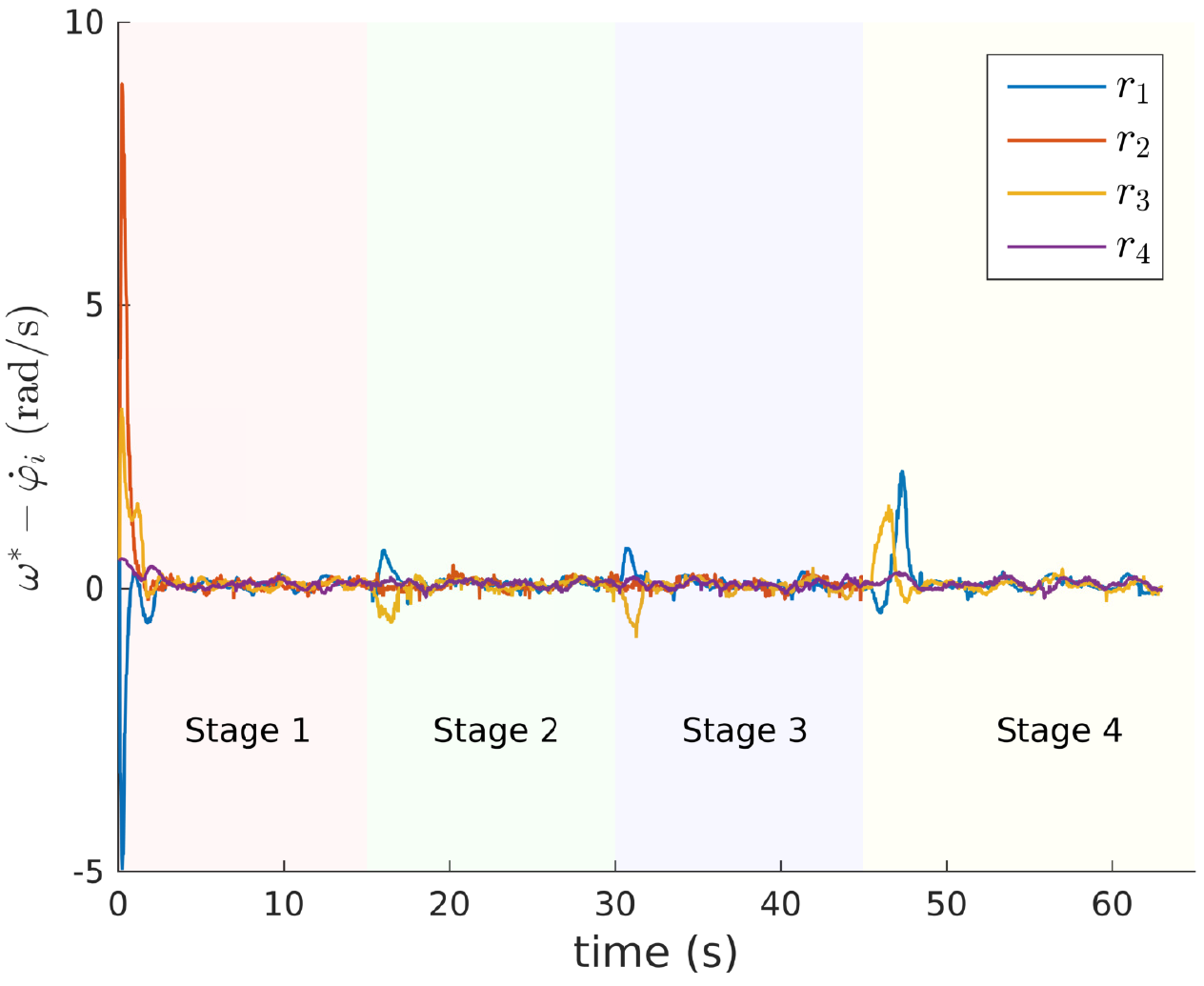}
			\caption{}
			\label{util_real_w}
		\end{subfigure}
		\begin{subfigure}[b]{.4\columnwidth}	
			\centering
			\includegraphics[width=\linewidth]{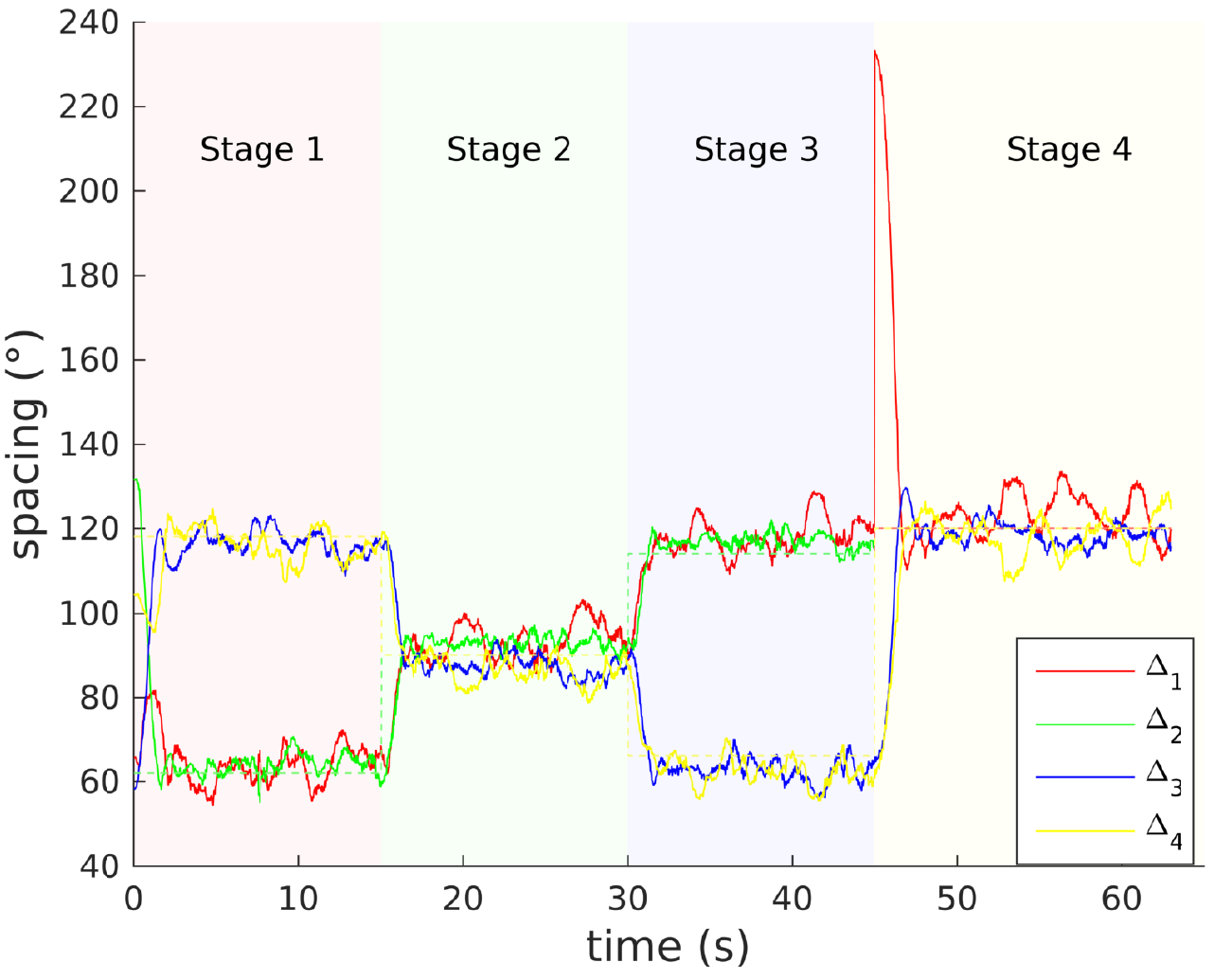}
			\caption{}
			\label{util_real_spac}
		\end{subfigure}
		\begin{subfigure}[b]{.4\columnwidth}
			\centering
			\includegraphics[width=\linewidth]{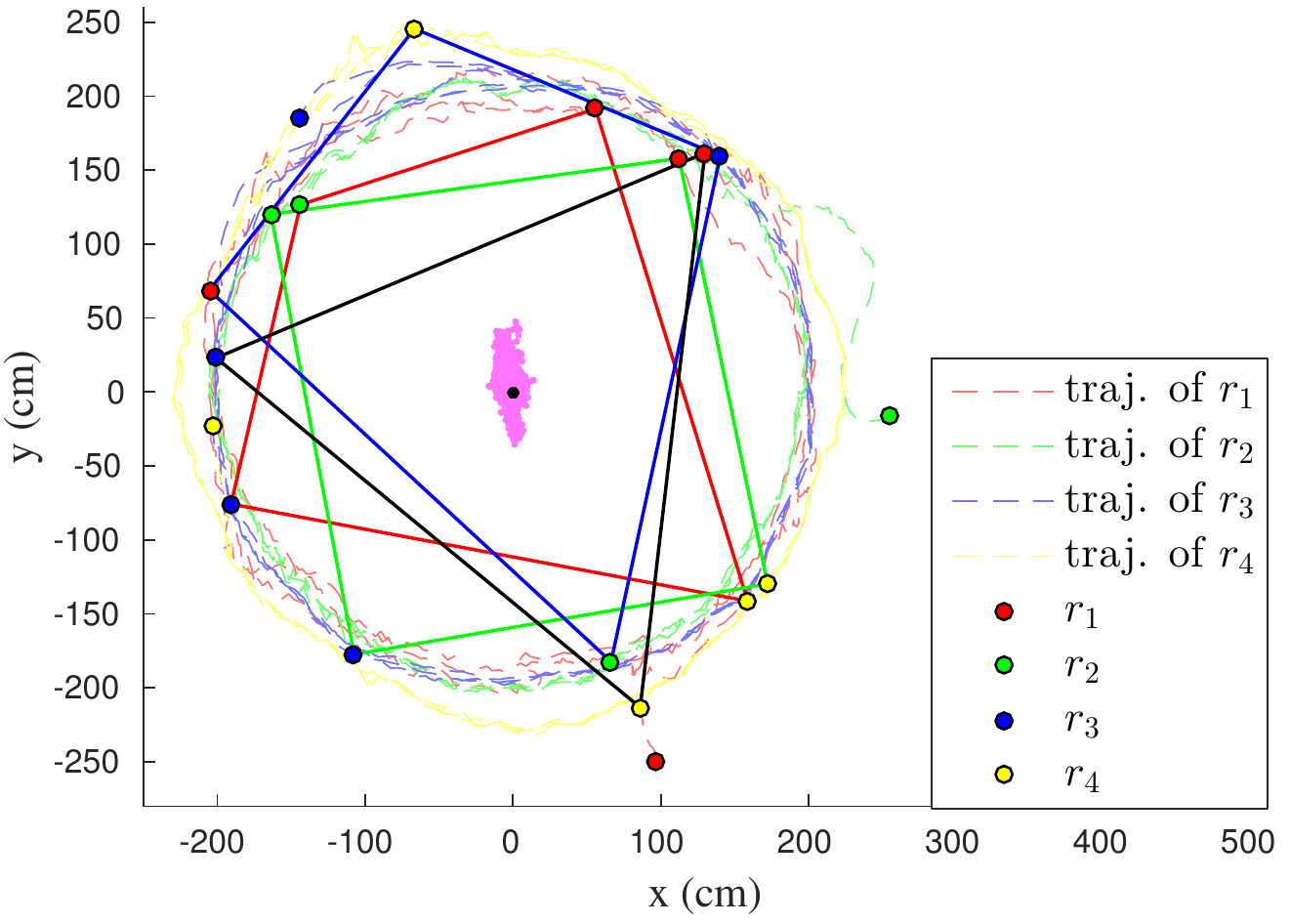}
			\caption{}
			\label{util_real_traj}
		\end{subfigure}
		\caption{The data plots of the real robot experiment. (a) and (b) are the error signals of the circumnavigation process. They are the plots for $\rho^*-\rho_i$ and $\omega^*-\dot{\phi_i}$ respectively. (c) is the curves of the real spacing and the desired spacing. The red, green, blue and yellow dashed lines are the desired spacing for robots $r_1$, $r_2$, $r_3$ and $r_4$ respectively. (d) is the plot of the trajectories of robots on the $X S Y$ plane.}
		\label{utility_real_data}
	\end{figure}

The utilities of robots $r_1$, $r_3$ and $r_4$ remain 20 throughout the whole circumnavigation process, while the utility of the robot $r_2$ varies according to a piecewise constant function. That is, $\mu_2=1, ~(0 \le t <15); ~\mu_2=20, ~(15 \le t <30); ~\mu_2=50, ~(30 \le t < 45); ~\mu_2=0, ~(t \ge 45)$. For convenience, the four time ranges are denoted by Stage 1, 2, 3 and 4 respectively. Note that at Stage 4, the robot $r_2$ quits from the circumnavigation process as its utility becomes zero. The utility of the four robots and the corresponding desired spacing is listed in Table \ref{tab:utility}. In this experiment, robots' initial positions are randomly chosen. The experiment parameters are $\rho^*=2$ m, $w^*=0.5$ rad/s, $k_\varphi=2.5$ and $k_\rho=2$. 

The circumnavigation process is shown in Fig. \ref{utility_real}. It demonstrates the positions of robots at different stages. The yellow lines connecting each robot's center indicate the formation shape. The ball in the middle of the field is the target to be encircled, which is marked by a red circle. The corresponding data plots are shown in Fig. \ref{utility_real_data}. Since robots move on the ground, the error plot of $-z$ is omitted. In Fig. \ref{util_real_traj}, the red, green, blue and black solid lines connecting the centres of robots represent the formation shapes at Stage 1, 2, 3 and 4 respectively. The dashed lines originated from robots are their trajectories. Note that since $r_2$ quits from the formation at Stage 4 ($\mu_2=0$), the data related to $r_2$ is not plotted after 45 s.

It can be seen from these figures that at Stage 1, since the utility of $r_2$ is the least, its neighboring robots $r_1$ and $r_3$ decrease their spacing with $r_2$ to compensate for this insufficiency (see Fig. \ref{util_real1} or the red solid lines in Fig. \ref{util_real_traj}). When it comes to Stage 2, robots form an equal-spacing formation as their utilities are equal (see Fig. \ref{util_real2} or the green solid lines in Fig. \ref{util_real_traj}). Stage 3 is contrary to Stage 1, where the utility of $r_2$ becomes the greatest. Therefore, its two neighboring robots increase the corresponding spacing with it (see Fig. \ref{util_real3} or the blue solid lines in Fig. \ref{util_real_traj}). Finally, at Stage 4, $r_2$ quits from the formation due to its zero utility. The other three robots form a new communication topology as stated in Remark \ref{remark5} (i.e., excluding the robot $r_2$) and transform to a equilateral triangular formation as their utilities are identical (see Fig. \ref{util_real4} or the black solid lines in Fig. \ref{util_real_traj}).

In addition, the circumnavigation radii, angular speeds and formation spacing converge to but fluctuate around the desired values at each stage (see Fig. \ref{util_real_rou}, Fig. \ref{util_real_w} and Fig. \ref{util_real_spac}). Moreover, at the three intersections of the stages (15 s, 30 s and 45 s respectively), the circumnavigation radii and the angular speeds of robots $r_1$ and $r_3$, the two neighboring robots of $r_2$, experience large deviation from the desired angular speed. They, however, converge swiftly afterwards (see Fig. \ref{util_real_w}). Noticeably,  at the last intersection (45 s), the circumnavigation radius and spacing for the robot $r_1$ deviate significantly from the desired values due to the absence of the robot $r_2$ in the formation, but the variations diminish rapidly subsequently (see Fig. \ref{util_real_rou} and Fig. \ref{util_real_spac}). The robot $r_4$ is hardly affected as it is not a neighboring robot of $r_2$. In Fig. \ref{util_real_traj}, the black dot at the center is the real position of the target and the cluster of pink dots are the perceived positions of the target by $r_1$. This manifests that information noise increases the uncertainty of the perceived information. Although there are fluctuations due to the information noise, the real spacing converges to the expected spacing at each stage (see Fig. \ref{util_real_spac}). Fig. \ref{util_real_traj} shows that the formation spacing changes dynamically as the utility of $r_2$ varies. 

\section{Concluding Remarks and Future Work} \label{sec5}

This paper proposes a distributed control law for a multi-robot system to realize circumnavigation process with dynamic spacing based on utilities. Unlike most of the existing study, in this paper, the spacing is not fixed and equal but they are dynamic, which is useful if robots are heterogeneous (e.g. with different kinematics capabilities). There is no assumption that the robots should be initially placed on a prescribed circle nor should they splay evenly on the circle. The control law is distributed and scalable. The theoretical analysis using graph theory along with the simulation and real-robot experiments prove the effectiveness of the proposed circumnavigation control algorithm based on utilities. 

Although Theorem \ref{theorem2} implies that robots will not collide with each other since their orders are unchanged during the circumnavigation process, this claim is based on the assumption that robots are considered as mass points. The collision avoidance problem taking into account the physical dimensions of robots will be studied in the future. 

\begin{acknowledgements}
	Our work is supported by National Science Foundation of China (NO. 61503401 and NO. 61773393), and graduate school of National University of Defense Technology.
\end{acknowledgements}



\bibliographystyle{splncs03}
\bibliography{ref/ref}

\begin{thebibliography}{10}
\providecommand{\url}[1]{\texttt{#1}}
\providecommand{\urlprefix}{URL }

\bibitem{franchi2016}
Franchi, A., Stegagno, P., Oriolo, G.: {Decentralized multi-robot encirclement
  of a 3D target with guaranteed collision avoidance}. Autonomous Robots
  40(2),  245--265 (2016)

\bibitem{horn2012}
Horn, R.A., Johnson, C.R.: {Matrix analysis}. Cambridge university press (2012)

\bibitem{lu2011robust}
Lu, H., Yang, S., Zhang, H., Zheng, Z.: A robust omnidirectional vision sensor
  for soccer robots. Mechatronics  21(2),  373--389 (2011)

\bibitem{Marshall2015}
Marshall, B.J.A., Broucke, M.E., Francis, B.A.: {Formation of vehicles in
  cyclic pursuit}. Automatic Control IEEE Transactions on  49(11),  1963--1974
  (2015)

\bibitem{mesbahi2010}
Mesbahi, M., Egerstedt, M.: {Graph theoretic methods in multiagent networks}.
  Princeton University Press (2010)

\bibitem{minc1988nonnegative}
Minc, H.: Nonnegative matrices. J. Wiley (1988)

\bibitem{olfati2004}
Olfati-Saber, R., Murray, R.M.: {Consensus problems in networks of agents with
  switching topology and time-delays}. IEEE Transactions on Automatic Control
  49(9),  1520--1533 (2004)

\bibitem{Pavone2007}
Pavone, M., Frazzoli, E.: {Decentralized Policies for Geometric Pattern
  Formation and Path Coverage}. Journal of Dynamic Systems Measurement {\&}
  Control  129(5),  633--643 (2007)

\bibitem{Tang2014}
Tang, S., Shinzaki, D., {G. Lowe}, C., Clark, C.M.: {Multi-robot control for
  circumnavigation of particle distributions}. Springer Tracts in Advanced
  Robotics  104,  149--152 (2014)

\bibitem{wang2013}
Wang, C., Xie, G., Cao, M.: {Forming Circle Formations of Anonymous Mobile
  Agents With Order Preservation}. IEEE Transactions on Automatic Control
  58(12),  3248--3254 (2013)

\bibitem{wang2014controlling}
Wang, C., Xie, G., Cao, M.: Controlling anonymous mobile agents with
  unidirectional locomotion to form formations on a circle. Automatica  50(4),
  1100--1108 (2014)

\bibitem{Xiong2016}
Xiong, D., Xiao, J., Lu, H., Zeng, Z., Yu, Q., Huang, K., Yi, X., Zheng, Z.:
  {The design of an intelligent soccer-playing robot}. Industrial Robot  43,
  91--102 (2016)

\bibitem{yao2015simulation}
Yao, W., Dai, W., Xiao, J., Lu, H., Zheng, Z.: A simulation system based on ros
  and gazebo for robocup middle size league. In: Robotics and Biomimetics
  (ROBIO), 2015 IEEE International Conference on. pp. 54--59. IEEE (2015)

\bibitem{weijia2017}
Yao, W., Zeng, Z., Wang, X., Lu, H., Zheng, Z.: Distributed encirclement
  control with arbitrary spacing for multiple anonymous mobile robots. In: 2017
  Chinese Control Conference (CCC). Dalian, China (Jul 2017)

\bibitem{Yu2016}
Yu, X., Liu, L.: {Distributed circular formation control of ring-networked
  nonholonomic vehicles}. Automatica  68,  92--99 (2016)

\bibitem{zheng2013distributed}
Zheng, R., Lin, Z., Fu, M., Sun, D.: Distributed circumnavigation by unicycles
  with cyclic repelling strategies. In: Control Conference (ASCC), 2013 9th
  Asian. pp. 1--6. IEEE (2013)

\bibitem{Zheng2015a}
Zheng, R., Liu, Y., Sun, D.: {Enclosing a target by nonholonomic mobile robots
  with bearing-only measurements}. Automatica  53,  400--407 (2015)

\end{thebibliography}

\end{document}